%% file: neurips_paper.tex
\documentclass{article}

\usepackage[preprint]{neurips_2026}

\usepackage[utf8]{inputenc}
\usepackage[T1]{fontenc}
\usepackage{hyperref}
\usepackage{url}

\usepackage{booktabs}
\usepackage{microtype}
\usepackage{graphicx}
\usepackage{subcaption}
\usepackage{pifont}
\usepackage{enumitem}
\usepackage{amsmath}
\usepackage{amssymb}
\usepackage{mathtools}
\usepackage{amsthm}
\usepackage{algorithm}
\usepackage[noend]{algpseudocode}
\usepackage[most]{tcolorbox}
\tcbuselibrary{breakable, listings}
\usepackage{colortbl}
\usepackage{float}
\usepackage[capitalize,noabbrev]{cleveref}

\theoremstyle{plain}
\newtheorem{theorem}{Theorem}[section]
\newtheorem{proposition}[theorem]{Proposition}
\newtheorem{lemma}[theorem]{Lemma}

\theoremstyle{definition}

\newtheorem{assumption}[theorem]{Assumption}
\theoremstyle{remark}

\definecolor{studentbg}{HTML}{f8cecc}
\definecolor{teacherbg}{HTML}{d5e8d4}

\title{PAINT: Partial-Solution Adaptive\\ Interpolated Training for Self-Distilled Reasoners}

\author{%
  Zhiquan Tan$^{1}$\thanks{Equal contribution.} \quad
  Yinrong Hong$^{2}$\footnotemark[1] \\
  $^{1}$Tsinghua University \quad $^{2}$Beihang University
}
\begin{document}

\maketitle
\setcounter{footnote}{0}

\begin{abstract}

Improving large language model (LLM) reasoning requires supervision that is both aligned with the model's own test-time states and informative at the token level. Reinforcement learning with verifiable rewards provides on-policy exploration but offers sparse, high-variance credit; supervised fine-tuning and distillation provide dense targets but often train on fixed trajectories or rely on stronger teachers. Recent privileged on-policy self-distillation explores a middle ground by scoring student rollouts with the same model under verified solution context~\citep{zhao2026self}. We revisit this setting through a contextual re-scoring lens: for reasoning, the important choices are not only whether privileged context is available, but how much of it should be revealed and where its distribution should shape the student. We propose \emph{PAINT} (\emph{Partial-solution Adaptive INterpolated Training}), which masks the verified solution according to rollout-reference overlap and applies a small energy-space interpolation on a sparse set of entropy-mismatch token positions. Across competition-level math benchmarks, PAINT consistently improves over a strong prior on-policy self-distillation baseline at all three Qwen3 scales. On Qwen3-8B, it raises macro Avg@12 by 2.1 points over this prior baseline and 2.9 points over GRPO\footnote{Our implementation can be found in \url{https://github.com/tzq1999/PAINT}}.

\end{abstract}

\input{sections/intro}
\input{sections/preliminary}
\input{sections/method}

\input{sections/experiments}
\input{sections/related}
\input{sections/conclusion}

\clearpage
\bibliographystyle{plainnat}
\bibliography{cite}

\clearpage
\onecolumn
\input{sections/appendix}

\end{document}

%% file: sections/intro.tex
\section{Introduction}

Reasoning post-training increasingly determines whether LLMs can solve difficult mathematical and scientific tasks, a capability made visible by chain-of-thought prompting and math-centered evaluation~\citep{wei2022chain,cobbe2021training,hendrycks2021measuring}. The central challenge is not just to reward correct final answers, but to provide supervision that is both \emph{deployment-aligned} and \emph{informative}: training should act on states the model actually visits at inference time, while still giving more guidance than a single sequence-level correctness bit.

Current recipes occupy different points in this trade-off. Reinforcement learning with verifiable rewards (RLVR), including GRPO-style methods~\citep{shao2024deepseekmath,guo2025deepseek,yu2025dapo}, optimizes the desired outcome on on-policy samples, but its reward is sparse and often costly to estimate. Supervised fine-tuning (SFT) on curated reasoning traces~\citep{guha2025openthoughtsdatarecipesreasoning,xiao2026mimov2flashtechnicalreport} is cheap and dense, but trains on reference prefixes rather than the model's own prefixes. Distillation supplies token-level targets, yet standard forms either remain off-policy or depend on an external stronger model~\citep{hinton2015distillingknowledgeneuralnetwork,agarwal2024policy,lu2025onpolicydistillation}.

Prior privileged on-policy self-distillation establishes a strong baseline in this middle ground~\citep{zhao2026self}. Instead of asking a separate teacher to supervise the student, it uses verified solutions as training-time context for another view of the same model, then scores the student's own rollouts under that privileged view. This gives dense on-policy supervision without introducing a stronger model. However, for reasoning tasks, the recipe leaves two important questions under-specified. A fully revealed solution can make the privileged distribution overly sharp and tied to one trace, while matching that distribution at every generated token can spend substantial loss on stylistic continuation rather than mathematical decisions.

\begin{figure*}[t]
\centering
\makebox[\textwidth][c]{%
  \includegraphics[width=1.1\textwidth]{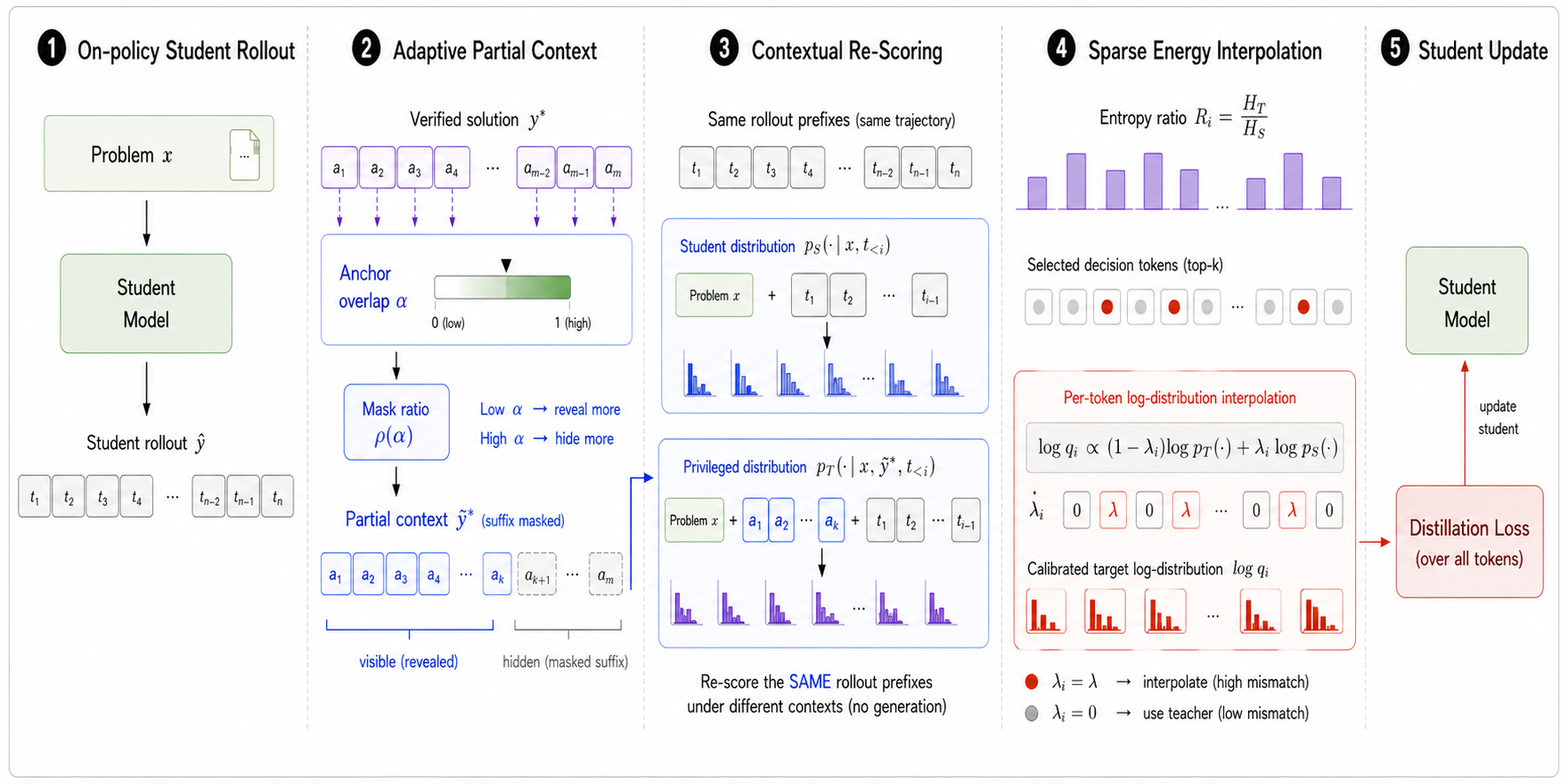}%
}
\captionsetup{justification=justified,width=0.95\textwidth,singlelinecheck=false}
\caption{\textbf{PAINT training pipeline.} PAINT samples an on-policy rollout, uses rollout-reference overlap $\alpha$ to form a suffix-masked solution $\tilde{y}^{\star}$, re-scores the same prefixes with a fixed privileged view, and applies small energy interpolation only on entropy-mismatch positions.}
\label{fig:mainfig}
\end{figure*}

Our theoretical lens makes this gap precise. Building on the idea that richer training-time context can be distilled into deployment-time behavior~\citep{snell2022learning,huang2022context,qicontext}, we view the privileged branch as the same autoregressive model under a context-induced energy landscape over continuations. On a fixed student prefix, adding solution context changes the local next-token energy rather than the visited state itself. This re-scoring view turns the open design choices into two principles. First, context exposure controls a strength-diversity trade-off: more revealed solution context gives sharper corrective guidance, but can collapse the target toward one reference trace. Second, privileged reweighting is local: only some prefixes expose a useful mismatch between the student and the privileged scorer.

As summarized in \Cref{fig:mainfig}, PAINT instantiates these principles with two simple controls. For context exposure, \textbf{P}artial-solution \textbf{A}daptive \textbf{IN}terpolated \textbf{T}raining uses rollout-reference overlap as a proxy for alignment: rollouts with low overlap receive more visible solution context, while already aligned rollouts hide more of the reference to encourage broader reasoning. For local calibration, PAINT applies sparse teacher energy interpolation, ranking positions by a teacher/student entropy-ratio statistic and gently interpolating the privileged target only on selected entropy-mismatch positions before the standard distillation objective.

Empirically, PAINT turns the prior on-policy self-distillation framework from a strong baseline into a stronger reasoning recipe. Across Qwen3-8B, 4B, and 1.7B, PAINT consistently improves over the vanilla prior self-distillation baseline, with macro Avg@12 gains between 0.8 and 2.1 points. It also matches or exceeds GRPO across scales while using short single-rollout training rather than grouped long rollouts, yielding substantially better rollout-token efficiency in our setting.

Our main contributions are summarized below:
\begin{itemize}[leftmargin=*, itemsep=0pt, parsep=0pt]
  \item We develop a contextual re-scoring view of privileged on-policy distillation and identify two reasoning-specific bottlenecks: how much solution context to reveal, and which token positions should receive calibrated teacher targets.
  \item We propose PAINT, which combines overlap-adaptive solution masking with sparse teacher energy interpolation, directly instantiating the two design principles suggested by the re-scoring view.
  \item We evaluate PAINT on three competition-level math benchmarks and three model scales, showing consistent gains over the prior on-policy self-distillation baseline and SFT, plus GRPO-level or better accuracy with a much smaller rollout budget.
\end{itemize}

%% file: sections/preliminary.tex
\section{Background}
\label{sec:opsd_background}

Reasoning post-training has a basic tension between \emph{where the trajectory comes from} and \emph{how informative the supervision is}. SFT and off-policy distillation provide dense token-level targets, but train on reference prefixes rather than the model's own prefixes. RLVR samples on-policy trajectories, but its correctness reward is sparse and sequence-level. On-policy distillation combines on-policy data with dense targets~\citep{agarwal2024policy, lu2025onpolicydistillation, xuspeculative}, but usually assumes an external teacher. The prior OPSD framework~\citep{zhao2026self} resolves this tension by using the verified solution to construct a privileged training-time view of the same model, thereby combining on-policy rollouts with dense token-level supervision without introducing a separate teacher model.

For each training example $(x,y^\star)\in\mathcal{S}$, let $z$ denote a privileged context derived from the verified solution. This framework defines two views:
\[
q_{\mathrm{base}}(\cdot \mid x) \;\triangleq\; p_\theta(\cdot \mid x),
\qquad
q_{\mathrm{aug}}(\cdot \mid x, z) \;\triangleq\; p_\phi(\cdot \mid x, z).
\]
The base view is the deployment-time policy; the augmented view sees the same problem plus privileged solution context and is used only during training. In this background section, we take $z=y^\star$ to describe the full-solution version. The concrete method in \S\ref{sec:method} instead uses the masked trace $z=\tilde{y}^\star$ and fixes $\phi=\theta_0$.

The baseline samples $\hat{y}=(\hat{y}_1,\ldots,\hat{y}_{|\hat{y}|})\sim q_{\mathrm{base}}(\cdot \mid x)$ and evaluates both views on the same sampled prefixes. In the full-solution background form used to define the baseline, define
\[
p_T^n(\cdot)
\triangleq
q_{\mathrm{aug}}\!\left(\cdot \mid x, y^\star, \hat{y}_{<n}\right),
\qquad
p_S^n(\cdot)
\triangleq
q_{\mathrm{base}}\!\left(\cdot \mid x, \hat{y}_{<n}\right).
\]
The rollout discrepancy is then
\begin{equation}
\begin{aligned}
\mathcal{D}_{\mathrm{roll}}(\hat{y}; x, y^\star)
&\triangleq \frac{1}{|\hat{y}|}
\sum_{n=1}^{|\hat{y}|}
D\Bigl(
p_T^n
\,\big\|\,
p_S^n
\Bigr),
\end{aligned}
\label{eq:per-token}
\end{equation}
where $D$ may be instantiated as forward KL, reverse KL, or a Jensen-Shannon-type divergence. The full objective is
\begin{equation}
\mathcal{J}_{\mathrm{OPSD}}(\theta)
=
\mathbb{E}_{(x,y^\star) \sim \mathcal{S}}
\mathbb{E}_{\hat{y} \sim q_{\mathrm{base}}(\cdot \mid x)}
\Bigl[
\mathcal{D}_{\mathrm{roll}}(\hat{y}; x, y^\star)
\Bigr].
\label{eq:opsd}
\end{equation}
The essential property is that supervision is dense and token-level, while the visited states still come from the policy being improved.

Our main implementation uses full-vocabulary forward KL with pointwise clipping. Given teacher targets $p_T^n$ and student distributions $p_S^n$ on the same sampled prefixes, define
\begin{equation}
D_{\mathrm{fKL}}^{\mathrm{clip}}(p_T,p_S;\hat{y})
=
\frac{1}{|\hat{y}|}
\sum_{n=1}^{|\hat{y}|}
\sum_{v \in \mathcal{V}}
\min\!\left(
p_T^n(v)
\bigl[
\log p_T^n(v)-\log p_S^n(v)
\bigr],
\tau
\right),
\label{eq:clipped_fkl}
\end{equation}
where $\tau$ is a pointwise clipping threshold. 

%% file: sections/method.tex
\section{Method}
\label{sec:method}

Our method modifies how a prior on-policy self-distillation scaffold~\citep{zhao2026self} constructs and uses its privileged teacher. We begin with a contextual re-scoring lens that separates the fixed scaffold--student rollouts scored under privileged context--from the design choices left to the practitioner. This lens points to two fragile parts of reasoning training: how much solution context to reveal, and which token positions should receive calibrated teacher targets. The next three subsections develop these choices in order: contextual re-scoring, adaptive partial-solution conditioning, and sparse interpolation.

\subsection{A Theoretical Lens: Contextual Re-Scoring}
\label{sec:method_instantiation}

The key modeling choice is not where the trajectory comes from, but how a student-produced prefix is scored. Once the deployment policy samples a rollout $\hat{y}$, each prefix $\hat{y}_{<n}$ can be evaluated under two contexts: the ordinary deployment prompt, or a privileged prompt that also contains solution information. The augmented branch is therefore a scorer over the student's visited states, not a generator of an alternative trajectory. If privileged context reweights preferences on the same generation tree, the main design questions become how much evidence to reveal to that scorer and how strongly its reweighted target should influence each position.

\begin{figure*}[t]
\centering
\begin{tcolorbox}[
    enhanced,
    colback=studentbg,
    colframe=red!45!black,
    coltitle=red!45!black,
    colbacktitle=studentbg,
    title=\textbf{Deployment prompt: student view},
    fonttitle=\bfseries,
    rounded corners,
    boxrule=0.8pt,
    width=0.94\textwidth
]
\ttfamily\small
Problem: How many ordered pairs of positive integers $(a,b)$ satisfy
$\frac{1}{a}+\frac{1}{b}=\frac{1}{6}$?\\[4pt]
Please reason step by step, and put your final answer within \textbackslash boxed\{\}.
\end{tcolorbox}
\vspace{6pt}
\begin{tcolorbox}[
    enhanced,
    colback=teacherbg,
    colframe=green!45!black,
    coltitle=green!45!black,
    colbacktitle=teacherbg,
    title=\textbf{Privileged scoring prompt: fixed teacher view},
    fonttitle=\bfseries,
    rounded corners,
    boxrule=0.8pt,
    width=0.94\textwidth
]
\ttfamily\small
Problem: How many ordered pairs of positive integers $(a,b)$ satisfy
$\frac{1}{a}+\frac{1}{b}=\frac{1}{6}$?\\[4pt]
Here is a part of reference solution to this problem:\\
=== Reference Solution Begin ===\\
Rewrite as $ab=6a+6b$, hence $(a-6)(b-6)=36$. ...(omitted)...\\
=== Reference Solution End ===\\[4pt]
\textbf{After reading the reference solution above, derive the same final answer in your own words.}\\[4pt]
Please reason step by step, and put your final answer within \textbackslash boxed\{\}.\\
\textnormal{[student rollout prefix being scored]}
\end{tcolorbox}
\caption{\textbf{Prompt views for privileged re-scoring.} The fixed teacher sees a masked reference solution and scores the same student-produced prefixes, yielding token-level targets without generating a separate trajectory.}
\label{fig:prompts}
\end{figure*}

To make this re-scoring view precise, we treat the privileged branch as the same autoregressive model under a different context-induced energy landscape over continuations. The following proposition records the standard correspondence between autoregressive likelihoods and Gibbs distributions on the generation tree.

\begin{proposition}[Context-induced Gibbs teacher]
\label{prop:method_arm_gibbs}
Fix a prompt context $z$, a finite horizon $T$, and a finite vocabulary. If the frozen autoregressive model is strictly positive, then the trajectory energy
\[
E_z(\tau)
=
-\sum_{i=1}^T \log p_{\theta_0}(\tau_i\mid z,\tau_{<i})
\]
has a Gibbs distribution equal to the model's autoregressive distribution over continuations:
\[
\frac{\exp(-E_z(\tau))}{\sum_{\tau'}\exp(-E_z(\tau'))}
=
p_{\theta_0}(\tau\mid z).
\]
Conversely, any additive energy $E_z(\tau)=\sum_{i=1}^T c_z(s_i,a_i)$ on the finite generation tree induces the normalized next-token policy
\[
p_E(a_i\mid z,a_{<i})
=
\exp\!\left(-c_z(s_i,a_i)+V_z(s_{i+1})-V_z(s_i)\right),
\]
where $s_i=(z,a_{<i})$ and $V_z$ is the log-partition soft value over suffix continuations.
\end{proposition}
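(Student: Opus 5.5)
The plan is to prove the two directions separately, working on the finite tree of length-$T$ continuations. The forward direction is immediate. Exponentiating the energy gives $\exp(-E_z(\tau))=\prod_{i=1}^T p_{\theta_0}(\tau_i\mid z,\tau_{<i})$, and by the chain rule this product equals $p_{\theta_0}(\tau\mid z)$. I will then show the partition function $\sum_{\tau'}\exp(-E_z(\tau'))$ equals $1$. To do this, sum the product from the innermost token outward: each conditional $p_{\theta_0}(\cdot\mid z,\tau_{<i})$ sums to one over the finite vocabulary, so the $T$ nested sums collapse one at a time (a short induction on $T$). Strict positivity is used only to make $E_z$ finite, i.e.\ so that $\log$ is defined everywhere and every $\tau$ has positive Gibbs weight.

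For the converse, I will define the soft value on prefix states $s_i=(z,a_{<i})$ by backward recursion. Set $V_z(s_{T+1})=0$ at the leaves, and for $i\le T$ set
\[
V_z(s_i)=\log\sum_{a}\exp\!\left(-c_z(s_i,a)+V_z(s_{i+1}(a))\right),
\]
where $s_{i+1}(a)=(z,a_{<i},a)$. Unrolling the recursion gives $V_z(s_i)=\log\sum_{a_{i:T}}\exp\!\left(-\sum_{j\ge i}c_z(s_j,a_j)\right)$, which is exactly the log-partition over suffix continuations named in the statement; I will verify this by a short induction. The proposed policy is then normalized at each state by construction: $\sum_a\exp(-c_z(s_i,a)+V_z(s_{i+1})-V_z(s_i))=\exp(V_z(s_i)-V_z(s_i))=1$. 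Finite vocabulary and finite horizon guarantee that all sums are finite and that $V_z$ is real-valued, since costs are real.

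To confirm that this policy induces the Gibbs distribution of $E_z$, I will multiply the conditionals along a trajectory. The $V_z$ terms telescope, leaving $\exp(-E_z(\tau)+V_z(s_{T+1})-V_z(s_1))=\exp(-E_z(\tau))/Z$, where $Z=\exp V_z(s_1)=\sum_{\tau'}\exp(-E_z(\tau'))$. This also ties the converse back to the forward statement: taking $c_z(s_i,a_i)=-\log p_{\theta_0}(a_i\mid s_i)$ makes every $V_z\equiv 0$, so $p_E$ recovers $p_{\theta_0}$.

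The only mildly delicate step is the bookkeeping. One has to index $V_z$ on states consistently so that the telescoping is exact, and fix the leaf convention $V_z(s_{T+1})=0$ so that "log-partition over suffix continuations" is unambiguous. I expect the induction identifying the recursive $V_z$ with the suffix log-partition to be the main thing to write carefully. Everything else is routine algebra.
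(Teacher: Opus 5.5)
Your proposal is correct and follows essentially the same argument as the paper. The forward direction is the same product-plus-normalization step, and your backward recursion for $V_z$ is the paper's factorization $Z(s_i,a_i)=e^{-c(s_i,a_i)}Z(s_{i+1})$ summed over $a_i$. The only differences are cosmetic: you check the induced trajectory law by telescoping the $V_z$ terms, while the paper identifies $p_E$ directly as the Gibbs conditional $Z(s_i,a_i)/Z(s_i)$, and your telescoping makes explicit what the paper's last sentence leaves implicit.
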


\noindent The proof is given in \Cref{prop:arm_gibbs_correspondence}. Intuitively, extra solution context does not create a different reasoning engine; it tilts the same model toward continuations that are more compatible with the reference solution. This suggests two concrete intervention points. First, because richer context can make the teacher target overly sharp, we control how much of the reference solution is visible. Second, because the reweighted target is not equally useful on every generated token, we calibrate it only on a sparse set of positions. These two intervention points motivate adaptive masking and sparse interpolation.

Although the abstract objective permits both views to share the current parameters, our main instantiation uses a fixed privileged teacher with $\phi=\theta_0$ and updates only the student parameters $\theta$. Under the re-scoring view, fixing the teacher keeps the privileged energy landscape stable and avoids having the target drift toward the student during training.

\subsection{Overlap-Adaptive Partial Solution Conditioning}
\label{sec:method_masking}

Once the privileged teacher is fixed, the next question is how informative it should be. Full-solution conditioning gives a low-entropy target that can over-commit to one trace, whereas too little context makes the teacher too similar to the student. We therefore reveal more when the student is far from the reference structure and hide more when it is already aligned. This design has two parts: how much of the solution to hide, and which part to hide.

We first address the amount of masking. In a stylized view, increasing the hidden ratio increases the entropy of the teacher target, which encourages broader generalization, but it also makes the teacher harder for the student to fit, increasing optimization noise. The next theorem formalizes this trade-off and gives the monotonic rule used by PAINT: the preferred hidden ratio decreases as alignment error grows. Let $\mathcal{X}$ be the finite set of valid continuations and let $\rho\in[0,1)$ denote a continuous hidden ratio for a contiguous masked block. Let $q_0\in\Delta(\mathcal{X})$ be the low-entropy teacher target with no masking, and let $q_1\in\Delta(\mathcal{X})$ be the more diffuse target induced by maximal contiguous masking. We interpolate them by
\[
q_\rho(\tau)
=
(1-\rho)q_0(\tau)+\rho q_1(\tau).
\]
For student parameters $\theta$, define the uncentered gradient second moment
\[
M_2(q,\theta)
=
\mathbb{E}_{\tau\sim q}
\left[
\|\nabla_\theta \log p_\theta(\tau)\|^2
\right],
\]
and let
\[
v_0
=
M_2(q_0,\theta),
\qquad
\epsilon(\theta)
=
M_2(q_1,\theta)-M_2(q_0,\theta).
\]
Here $\epsilon$ measures the extra optimization noise induced by aggressive masking, while the entropy of $q_\rho$ measures how much multi-modality the teacher exposes to the student. By linearity of expectation, the second moment along the interpolation path is
\[
M_2(q_\rho,\theta)
=
v_0+\rho\epsilon.
\]

\begin{theorem}[Monotonicity of the optimal adaptive mask ratio]
\label{thm:method_adaptive_mask_monotonicity}
Assume $q_\rho(\tau)>0$ on the relevant support and define the Shannon entropy
\[
H(\rho)
=
-\sum_{\tau\in\mathcal{X}} q_\rho(\tau)\log q_\rho(\tau).
\]
Assume $q_0\neq q_1$. For coefficients $\beta,\lambda>0$, consider the objective
\[
\mathcal{L}(\rho,\epsilon)
=
\beta H(\rho)-\lambda(v_0+\rho\epsilon).
\]
Whenever an interior maximizer $\rho^\star(\epsilon)\in(0,1)$ exists, its differentiable branch is a strictly decreasing function of the alignment-error proxy:
\[
\frac{d \rho^\star}{d\epsilon}<0.
\]
\end{theorem}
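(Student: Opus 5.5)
The argument is a standard implicit-function computation built on the strict concavity of $H$ along the mixture path. First I would compute the derivatives of $H(\rho)$ along the segment $q_\rho=(1-\rho)q_0+\rho q_1$. Write $\Delta(\tau)=q_1(\tau)-q_0(\tau)$; then $\sum_\tau \Delta(\tau)=0$. Differentiating termwise on the finite set $\mathcal{X}$ gives
\[
H'(\rho)=-\sum_{\tau}\Delta(\tau)\bigl(\log q_\rho(\tau)+1\bigr)=-\sum_\tau \Delta(\tau)\log q_\rho(\tau).
\]
Differentiating once more gives
\[
H''(\rho)=-\sum_\tau \frac{\Delta(\tau)^2}{q_\rho(\tau)}.
\]
Positivity of $q_\rho$ on the relevant support makes these expressions well defined. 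Since $q_0\neq q_1$, some $\Delta(\tau)\neq 0$, so $H''(\rho)<0$ strictly on $(0,1)$. Thus $H$ is strictly concave along the path.

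Next I would write the first-order condition for an interior maximizer. Since $\mathcal{L}(\rho,\epsilon)=\beta H(\rho)-\lambda v_0-\lambda\rho\epsilon$, we have $\partial_\rho\mathcal{L}=\beta H'(\rho)-\lambda\epsilon$. Any interior maximizer therefore satisfies
\[
F(\rho^\star,\epsilon)\triangleq \beta H'(\rho^\star)-\lambda\epsilon=0.
\]
Moreover, $\partial_\rho^2\mathcal{L}=\beta H''(\rho)<0$ everywhere on $(0,1)$. So $\mathcal{L}(\cdot,\epsilon)$ is strictly concave, the interior maximizer is unique when it exists, and the nondegeneracy condition $\partial_\rho F\neq 0$ holds.

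Finally I would apply the implicit function theorem to $F$. $F$ is $C^1$ in $(\rho,\epsilon)$ because $H$ is smooth wherever $q_\rho>0$, and $\partial_\rho F=\beta H''(\rho^\star)\neq 0$. Hence the branch $\epsilon\mapsto\rho^\star(\epsilon)$ is differentiable, with
\[
\frac{d\rho^\star}{d\epsilon}=-\frac{\partial_\epsilon F}{\partial_\rho F}=-\frac{-\lambda}{\beta H''(\rho^\star)}=\frac{\lambda}{\beta H''(\rho^\star)}<0,
\]
using $\lambda,\beta>0$ and $H''<0$.

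The only step needing care is the strict sign of $H''$, which is the main obstacle. It requires $\Delta\not\equiv 0$, which is given by $q_0\neq q_1$. It also requires that $q_\rho(\tau)>0$ wherever $\Delta(\tau)\neq 0$. On $(0,1)$ this holds automatically on $\operatorname{supp}(q_0)\cup\operatorname{supp}(q_1)$, and terms with $q_0(\tau)=q_1(\tau)=0$ contribute nothing to either derivative. I would state this explicitly so that the termwise differentiation of $q\log q$ is justified.
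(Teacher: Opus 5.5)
Your proposal is correct and follows essentially the same route as the paper. Both arguments use strict concavity of $H$ along the mixture path via $H''(\rho)=-\sum_\tau (q_1(\tau)-q_0(\tau))^2/q_\rho(\tau)<0$, then the first-order condition $\beta H'(\rho^\star)=\lambda\epsilon$, then the implicit function theorem to obtain $d\rho^\star/d\epsilon=\lambda/(\beta H''(\rho^\star))<0$. Your explicit handling of the support of $q_\rho$ and your note on uniqueness of the interior maximizer are minor refinements that the paper leaves implicit.
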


A detailed proof is given in \Cref{thm:appendix_adaptive_mask_monotonicity}. The theorem gives the qualitative rule we need: when the student is poorly aligned with the privileged target (large $\epsilon$), the teacher should reveal more and hide less; as alignment improves, the teacher can hide more and encourage broader generalization. To instantiate this principle on each rollout, we need a practical proxy for inverse alignment error. After sampling $\hat{y}\sim q_{\mathrm{base}}(\cdot \mid x)$, the trainer extracts a set of reference anchors $\mathcal{A}(y^\star)$, including the boxed answer, formula-like spans, and informative numbers, after normalization of common mathematical notation. It then computes a recall-style overlap score
\begin{equation}
\alpha(x,\hat{y},y^\star)
=
\frac{1}{|\mathcal{A}(y^\star)|}
\sum_{a\in\mathcal{A}(y^\star)}
\mathbf{1}\!\left\{a \subseteq \operatorname{Norm}(\hat{y})\right\},
\qquad
\alpha\in[0,1].
\end{equation}
If no anchors are found, the implementation sets $\alpha=0$. The exact final-answer verifier is still computed for logging and analysis, but the teacher schedule is driven by this continuous overlap score rather than by a hard correct/incorrect branch. Higher overlap means that the rollout already reproduces more of the reference's mathematical structure, so it is safe to hide more. We therefore use the affine schedule
\begin{equation}
\rho(\alpha)
=
\rho_{\mathrm{wrong}}
+
\left(\rho_{\mathrm{correct}}-\rho_{\mathrm{wrong}}\right)\alpha,
\qquad
\rho_{\mathrm{correct}}>\rho_{\mathrm{wrong}}.
\end{equation}
Thus $\alpha$ acts as a practical proxy for inverse alignment error, and $\rho(\alpha)$ is the concrete monotone curriculum suggested by the theorem.

The theorem above explains how much privileged information to withhold, but it does not say which part of the solution should be hidden. We restrict attention to contiguous masks because they preserve local coherence. Among them, we prefer suffix masks because early reference tokens usually carry the setup, decomposition, and high-level plan that make later reasoning predictable, whereas later tokens more often execute downstream algebra or state the final answer. Appendix \ref{sec:suffix_masking_theory} makes this intuition formal: masking induces a posterior mixture over hidden completions, which smooths the teacher target and Rao--Blackwellizes the forward-KL gradient relative to conditioning on a single revealed completion. For a fixed masking budget $k$, the preferred mask position is therefore the one that removes the least privileged information relevant to the teacher's next action. Let
\[
M_t = y^\star_{t:t+k-1},
\qquad
C_t = (y^\star_{<t}, y^\star_{t+k:T}),
\]
and quantify this trade-off by
\[
\Gamma(t)
=
\mathbb{E}\!\left[I(A_n;M_t\mid x,\hat{y}_{<n},C_t)\right],
\]
the expected privileged information about the teacher's next action that is available only in the hidden block. In typical reasoning traces, early tokens more often carry definitions, reductions, and the high-level plan, whereas later tokens more often execute downstream algebra or state the final answer. Under this causal information asymmetry, suffix masking is optimal.

\begin{theorem}[Optimality of suffix masking]
\label{thm:method_suffix_masking_optimality}
Assume later mask positions leave weakly more informative visible context for predicting the teacher's next action than earlier mask positions. Then $\Gamma(t)$ is non-increasing in $t$. Hence the minimum expected information loss is attained at the largest feasible mask position,
\[
t^\star=T-k+1,
\]
which masks the suffix of length $k$.
\end{theorem}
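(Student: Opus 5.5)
The plan is to reduce the statement to its monotonicity assumption by rewriting $\Gamma(t)$ as a difference of two conditional entropies. The assumption should then translate directly into a pointwise inequality between consecutive mask positions. For each $n$ and each feasible $t\in\{1,\dots,T-k+1\}$, the chain rule for mutual information gives
\[
I(A_n;M_t\mid x,\hat{y}_{<n},C_t)
= H(A_n\mid x,\hat{y}_{<n},C_t) - H(A_n\mid x,\hat{y}_{<n},y^\star).
\]
This uses the fact that $(C_t,M_t)$ jointly recovers the full reference $y^\star$. The second term does not depend on $t$, so $\Gamma(t)$ equals a $t$-independent constant plus
\[
\mathbb{E}\bigl[H(A_n\mid x,\hat{y}_{<n},C_t)\bigr].
\]
Showing that $\Gamma$ is non-increasing is therefore equivalent to showing that the expected conditional entropy of the teacher's next action given the visible context is non-increasing in $t$.

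Next I would formalize the hypothesis. I would read ``later mask positions leave weakly more informative visible context'' as the inequality
\[
\mathbb{E}\bigl[H(A_n\mid x,\hat{y}_{<n},C_{t+1})\bigr]\le \mathbb{E}\bigl[H(A_n\mid x,\hat{y}_{<n},C_t)\bigr]
\]
for every feasible $t$. A stronger sufficient version would state this in the Blackwell sense. In that version there is a garbling $C_{t+1}\to C_t$ forming a Markov chain $A_n - (x,\hat{y}_{<n},C_{t+1}) - (x,\hat{y}_{<n},C_t)$, and the inequality then follows from the data-processing inequality. Either way, substituting into the identity above gives $\Gamma(t+1)\le\Gamma(t)$. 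Induction over $t$ then gives $\Gamma(T-k+1)\le\Gamma(t)$ for all feasible $t$. So the minimum is attained at $t^\star=T-k+1$, which is exactly the mask $M_{t^\star}=y^\star_{T-k+1:T}$, the length-$k$ suffix. If $n$ is also averaged over, linearity of expectation preserves the inequality.

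The main obstacle is making the hypothesis precise enough that the theorem is not vacuous, while keeping it honest. The literal argument is short, and the content lies in the interpretation of ``more informative.'' I would first try the entropy-ordering formulation above, since it makes the proof a one-line telescoping argument. I would then remark that it is implied by the causal information asymmetry described before the theorem. The idea there is that early tokens carry the plan from which later algebra is approximately determined, so that knowing the prefix $y^\star_{<t+1}$ statistically dominates knowing $y^\star_{<t}$ together with a shifted later block. A minor technicality is the boundary case where the conditional entropies are infinite or the conditioning events have zero probability. Finite vocabulary and horizon rule out infinite entropies, and I would restrict all conditioning to the support of the joint distribution.
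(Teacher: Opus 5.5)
Your proof is correct and is essentially the paper's argument: your identity $\Gamma(t)=\mathbb{E}[H(A_n\mid x,\hat{y}_{<n},C_t)]-\text{const}$ is the paper's visible-context duality $\Gamma(t)=K-\mathbb{E}[I(A_n;C_t\mid x,\hat{y}_{<n})]$, rewritten via $I(A_n;C_t\mid x,\hat{y}_{<n})=H(A_n\mid x,\hat{y}_{<n})-H(A_n\mid x,\hat{y}_{<n},C_t)$, where $H(A_n\mid x,\hat{y}_{<n})$ does not depend on $t$. Likewise, your entropy-ordering hypothesis is exactly the paper's causal-information-asymmetry assumption (the paper states it for all pairs $t_1<t_2$ rather than consecutive ones, which your telescoping makes equivalent), so monotonicity and the minimizer $t^\star=T-k+1$ follow the same way.
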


\noindent Appendix \ref{sec:suffix_masking_theory} provides the detailed derivation, including entropy smoothing of masked targets, the forward-KL information decomposition, the Rao--Blackwellized variance reduction, the visible-context duality, and the proof of \Cref{thm:method_suffix_masking_optimality}. Combining the monotone ratio schedule with this placement result gives the privileged trace used in practice:
\begin{equation}
\tilde{y}^\star
=
M_{\mathrm{tail}}\!\left(y^\star; \rho(\alpha)\right).
\end{equation}
In the main runs, $M_{\mathrm{tail}}$ removes a character-level suffix of the reference solution and replaces it with an omission marker. The code also supports prefix, middle, and punctuation-aware suffix masking variants, but the default setting used in the reported experiments is suffix masking.

\subsection{Sparse Teacher Energy Interpolation}
\label{sec:method_calibration}

After deciding what the teacher sees, the final question is how literally to trust the resulting token distribution. Even with adaptive masking, some positions contain useful privileged guidance, while others mostly reflect stylistic continuation or residual uncertainty. We therefore do not calibrate the teacher everywhere. Instead, we spend a sparse calibration budget on positions with the largest entropy mismatch, where a small energy-space correction can soften over-specific privileged targets without washing out the signal on every token. Given the masked teacher prompt, define the student and teacher next-token distributions on the same student prefix:
\begin{equation}
p_S^n(\cdot)
\triangleq
q_{\mathrm{base}}(\cdot \mid x, \hat{y}_{<n}),
\qquad
p_T^n(\cdot)
\triangleq
q_{\mathrm{aug}}(\cdot \mid x, \tilde{y}^\star, \hat{y}_{<n}).
\end{equation}
The trainer therefore applies a conservative interpolation step in token-energy space before computing the distillation loss, but only on positions selected by a sparse budget. Let $z_S^n$ and $z_T^n$ be the raw student and teacher logits for prefix $\hat{y}_{<n}$, and let $p_S^n=\operatorname{softmax}(z_S^n/T)$ and $p_T^n=\operatorname{softmax}(z_T^n/T)$ be the temperature-scaled distributions. Define token energies
\[
E_S^n(v)=-\log p_S^n(v),
\qquad
E_T^n(v)=-\log p_T^n(v).
\]

Let
\begin{equation}
H_S^n = - \sum_{v \in \mathcal{V}} p_S^n(v)\log p_S^n(v),
\qquad
H_T^n = - \sum_{v \in \mathcal{V}} p_T^n(v)\log p_T^n(v),
\end{equation}
and define the entropy ratio
\begin{equation}
R_n=\frac{H_T^n}{H_S^n}.
\label{eq:entropy_ratio}
\end{equation}
We use $R_n$ only as a rank statistic. Under the local Gaussian energy approximation and a shared entropy--variance equation of state, \Cref{prop:entropy_ratio_sparse_selection} shows that the signed differential-entropy reduction from applying the fixed interpolation strength at position $n$ can be written as
\begin{equation}
\Delta I_n(k_{\mathrm{base}})
=
\frac{1}{2}\log\!\left(1-k_{\mathrm{base}}+k_{\mathrm{base}}R_n^\gamma\right).
\label{eq:entropy_ratio_information_gain}
\end{equation}
This signed gain is strictly order-preserving in $R_n$ for $k_{\mathrm{base}}\in(0,1)$ and $\gamma>0$. Rather than learning a continuous $k_n$, which can collapse toward replacing the privileged target by the student distribution on large-ratio positions, the implementation fixes $k_{\mathrm{base}}$ and only decides where to intervene. Let $\mathcal{J}$ be the valid generated-token positions and let $S=\lceil f_{\mathrm{top}}|\mathcal{J}|\rceil$. The sparse selector corresponds to the finite $L_0$-budget problem
\begin{equation}
\begin{aligned}
\max_{\mathbf{m}\in\{0,1\}^{|\mathcal{J}|}}
\quad&
\sum_{n\in\mathcal{J}} m_n\,\Delta I_n(k_{\mathrm{base}})
\\
\mathrm{s.t.}\quad&
\sum_{n\in\mathcal{J}} m_n=S .
\end{aligned}
\label{eq:sparse_selector_objective}
\end{equation}
Since $\Delta I_n(k_{\mathrm{base}})$ and $R_n$ induce the same ordering, the global optimum is obtained by choosing the $S$ largest entropy ratios, with arbitrary tie-breaking at the cutoff. For each sample, this top-$f_{\mathrm{top}}$ set is denoted by $\mathcal{I}$. Conceptually, only on this sparse set is the teacher energy moved toward the student energy:
\begin{equation}
\tilde{E}_T^n(v)
=
\begin{cases}
(1-k_{\mathrm{base}})E_T^n(v)+k_{\mathrm{base}}E_S^n(v), & n\in\mathcal{I},\\
E_T^n(v), & n\notin\mathcal{I}.
\end{cases}
\label{eq:sparse_energy_interp}
\end{equation}
The reported runs use a constant $k_{\mathrm{base}}$ and do not use an entropy-ratio scale term; the entropy ratio only determines which positions receive interpolation. Finally,
\begin{equation}
\tilde{p}_T^n(v)
=
\frac{\exp(-\tilde{E}_T^n(v))}
{\sum_{u\in\mathcal{V}}\exp(-\tilde{E}_T^n(u))}.
\label{eq:teacher_interp}
\end{equation}
Equivalently, on selected positions,
\[
\tilde{p}_T^n(v)
\propto
\bigl(p_T^n(v)\bigr)^{1-k_{\mathrm{base}}}
\bigl(p_S^n(v)\bigr)^{k_{\mathrm{base}}},
\]

\Cref{prop:precision_weighted_calibration} gives a broader energy-fusion motivation; the implementation uses the sparse constant-weight version above to avoid washing out the privileged signal across all tokens.

Substituting the interpolated teacher targets into Eq.~\ref{eq:clipped_fkl} gives the rollout loss used in the main experiments:
\begin{equation}
\widetilde{\mathcal{D}_{\mathrm{roll}}}(\hat{y}; x, \tilde{y}^\star)
=
D_{\mathrm{fKL}}^{\mathrm{clip}}
\left(
\{\tilde{p}_T^n\}_{n=1}^{|\hat{y}|},
\{p_S^n\}_{n=1}^{|\hat{y}|};
\hat{y}
\right).
\label{eq:main_loss}
\end{equation}

The complete update, including overlap-adaptive masking and sparse teacher energy interpolation, is summarized in Appendix \ref{sec:appendix_training_algorithm}.

%% file: sections/experiments.tex
\section{Experiments}
\label{sec:experiments}

Our experiments test whether overlap-adaptive partial-solution conditioning and sparse teacher interpolation improve a strong prior on-policy self-distillation baseline under matched data, backbones, rollout budget, and privileged-teacher setup. We compare against supervised imitation, reward-based RL, and the closest prior self-distillation baseline across model scales and competition-level math benchmarks.

\subsection{Experimental Setup}

\textbf{Models and data.} We use the instruct-tuned Qwen3 family~\citep{qwen3technicalreport} at three scales: Qwen3-1.7B, Qwen3-4B, and Qwen3-8B. Training data comes from the mathematical reasoning subset of OpenThoughts~\citep{guha2025openthoughtsdatarecipesreasoning}; for each scale, we sample up to 30K problem-solution pairs with verified final answers and reasoning traces. Evaluation covers AIME 2024, AIME 2025, and HMMT 2025.

\textbf{Compared post-training recipes.} We evaluate four updates from the same base checkpoints. \textbf{SFT} directly imitates the reference reasoning traces and represents an off-policy token-level baseline. \textbf{GRPO}~\citep{shao2024deepseekmath} optimizes binary answer rewards with grouped rollouts; its maximum generation length is 16k. \textbf{OPSD} is the closest prior on-policy self-distillation baseline~\citep{zhao2026self}: it scores each student rollout with a privileged full-solution view but does not use PAINT's adaptive masking or sparse interpolation. \textbf{PAINT} is our method, which keeps the same rollout-and-score structure while changing what the privileged teacher sees and where its distribution is calibrated.

\textbf{Training and evaluation protocol.} Unless otherwise stated, PAINT uses a fixed privileged teacher anchored at the initial checkpoint while the student policy is updated with LoRA~\citep{hu2022lora}. Teacher forward passes disable the trainable adapters. PAINT conditions the teacher on an overlap-adaptive partial reference solution with $\rho_{\mathrm{wrong}}=0.30$ and $\rho_{\mathrm{correct}}=0.40$, then interpolates teacher logits toward student logits with $k_{\mathrm{base}}=0.03$ on the top $f_{\mathrm{top}}=0.03$ fraction of generated-token positions ranked by the teacher/student entropy ratio. Additional training details appear in Appendix~\ref{sec:experimental_details}.

\subsection{Main Results}
\label{sec:main_results}

\begin{table*}[t]
\centering
\caption{Math reasoning Avg@12 after post-training Qwen3 backbones. Evaluation uses thinking mode, temperature $1.0$, top-p $0.95$, and maximum length $38$k. PAINT and OPSD use the same training split, rollout budget, and fixed privileged teacher; PAINT adds only adaptive partial-solution conditioning and sparse teacher interpolation.}
\vspace{1mm}
\label{tab:main_results}
\resizebox{0.78\textwidth}{!}{
\begin{tabular}{@{}lcccc@{}}
\toprule
\textbf{Backbone / post-training recipe} & \textbf{AIME 2024} & \textbf{AIME 2025} & \textbf{HMMT 2025} & \textbf{Macro avg.} \\
\midrule
\multicolumn{5}{@{}l}{\textit{Qwen3-8B Instruct}} \\
\quad No update              & 75.8 & 65.6 & 43.9 & 61.8 \\
\quad + SFT                  & 72.3 & 64.2 & 42.9 & 59.8 \\
\quad + GRPO                 & 76.4 & 68.9 & 46.7 & 64.0 \\
\quad + OPSD (prior)         & 77.8 & 70.8 & 45.8 & 64.8 \\
\rowcolor{gray!20}\quad + PAINT (ours)
& \textbf{79.2} & \textbf{72.2} & \textbf{49.4} & \textbf{66.9} \\
\midrule
\multicolumn{5}{@{}l}{\textit{Qwen3-4B Instruct}} \\
\quad No update              & 74.9 & 66.4 & 42.2 & 61.2 \\
\quad + SFT                  & 70.2 & 62.3 & 43.4 & 58.6 \\
\quad + GRPO                 & 75.6 & 68.1 & 44.4 & 62.7 \\
\quad + OPSD (prior)         & 76.4 & 68.3 & 46.1 & 63.6 \\
\rowcolor{gray!20}\quad + PAINT (ours)
& \textbf{76.9} & \textbf{69.7} & \textbf{46.7} & \textbf{64.4} \\
\midrule
\multicolumn{5}{@{}l}{\textit{Qwen3-1.7B Instruct}} \\
\quad No update              & 51.5 & 36.7 & 23.1 & 37.1 \\
\quad + SFT                  & 48.4 & 36.3 & 22.7 & 35.8 \\
\quad + GRPO                 & 51.1 & 38.3 & 23.7 & 37.7 \\
\quad + OPSD (prior)         & 57.2 & 43.9 & 29.2 & 43.4 \\
\rowcolor{gray!20}\quad + PAINT (ours)
& \textbf{59.7} & \textbf{44.2} & \textbf{29.7} & \textbf{44.5} \\
\bottomrule
\end{tabular}
}
\end{table*}

Table~\ref{tab:main_results} shows that PAINT improves over the prior on-policy self-distillation baseline at every model scale: +2.1 points on Qwen3-8B, +0.8 on Qwen3-4B, and +1.1 on Qwen3-1.7B. Since OPSD already beats SFT by turning reference solutions into dense on-policy supervisory distributions, these gains suggest that controlling privileged context and concentrating calibration on entropy-mismatch positions matters beyond the basic re-scoring scaffold.

PAINT also matches or exceeds GRPO across all three scales with a smaller sampling budget: one 1024-token rollout per problem and best checkpoints within 100 steps, compared with 8 GRPO rollouts of up to 16k tokens over a longer horizon. In this OpenThoughts setting, many GRPO batches have zero within-group reward standard deviation, so the larger rollout-token budget can still yield little learning signal.

Finally, SFT degrades on average at all three scales, likely because concise reference traces shorten test-time reasoning. OPSD avoids this by scoring the student's own prefixes instead of training on reference prefixes. PAINT keeps that deployment-aligned property while reducing over-conditioning on a fully revealed solution and equal weighting of mathematical decisions and low-value stylistic continuations.

Appendix~\ref{sec:ablation_studies} isolates the two PAINT components on Qwen3-1.7B. Moderate adaptive suffix masking outperforms fixed full-solution conditioning and heavier masks, suffix placement beats prefix, middle, and random contiguous masks, and sparse interpolation is strongest when applied to a small top-$f_{\mathrm{top}}$ budget rather than every token. These ablations support the intended mechanisms.

%% file: sections/related.tex
\section{Related Work}
\label{sec:related_work}

Reasoning post-training includes RLVR with verifiable rewards~\citep{shao2024deepseekmath,guo2025deepseek,team2025kimi,yu2025dapo,rastogi2025magistral}, process supervision~\citep{lightman2023let,zhang2025lessons}, and SFT or self-improvement over curated and self-generated traces~\citep{guha2025openthoughtsdatarecipesreasoning,xiao2026mimov2flashtechnicalreport,zelikman2022star,gulcehre2023reinforced,wang2023self,yuan2024self}. These approaches provide useful signals but differ in whether supervision is on-policy, dense, or externally annotated. RLVR directly optimizes final-answer correctness but can be statistically inefficient when rewards are sparse; process supervision gives finer feedback but requires additional labels or learned reward models; and imitation-style training depends strongly on the quality and length of reference traces. PAINT targets the complementary regime where verified answers are available but step labels and stronger external teachers are not. In this regime, the main question is how to turn a verified solution into useful token-level feedback without moving training off the student's own trajectories.

PAINT is closest to distillation and privileged-context self-training. Classical and sequence-level distillation transfer teacher distributions or generated sequences~\citep{hinton2015distillingknowledgeneuralnetwork,kim2016sequence,sanh2019distilbert}, while on-policy distillation supervises student-sampled trajectories~\citep{agarwal2024policy,gu2024minillm,xuspeculative,lu2025onpolicydistillation}. Context distillation compresses richer-context behavior into a deployment-time model~\citep{snell2022learning,huang2022context,qicontext}, and privileged on-policy self-distillation applies this idea to verified reasoning solutions~\citep{zhao2026self,hubotter2026reinforcement,shenfeld2026selfdistillationenablescontinuallearning}. PAINT keeps the same rollout re-scoring scaffold but replaces full-solution conditioning and uniform token matching with overlap-adaptive suffix masking and sparse energy interpolation. Unlike external-teacher distillation, the privileged scorer is not a stronger model; the benefit comes from changing the information available at training time and deciding where that information should affect the student. This makes the contribution complementary to reward design and data curation: the training examples and verifier stay fixed, while the privileged scoring distribution is made less over-specific and more selectively applied.

%% file: sections/conclusion.tex
\section{Conclusion}

Reasoning post-training needs feedback that is both dense and aligned with the model's own visited states. We argue that privileged on-policy self-distillation works through contextual re-scoring: solution context reweights the same student prefixes, making context exposure and token-level calibration central design choices. PAINT instantiates this view with overlap-adaptive partial-solution conditioning and sparse teacher energy interpolation. Across mathematical reasoning benchmarks, it improves over the vanilla prior baseline and matches or exceeds GRPO with a much smaller rollout budget; on Qwen3-8B, it gains 2.1 points over OPSD and 2.9 over GRPO. Extending this re-scoring principle beyond verified math traces remains an important next step, especially for domains where partial references are noisier and token-level teacher uncertainty is harder to interpret.

%% file: sections/appendix.tex
\appendix

\section{Training Algorithm}
\label{sec:appendix_training_algorithm}

Algorithm~\ref{alg:adaptive_opsd} summarizes the complete PAINT recipe. Each example contributes one on-policy rollout, one overlap-adaptive masked privileged prompt, one pair of student/teacher distributions per prefix, and one clipped forward-KL loss. Unless otherwise stated, all experiments use this fixed-teacher, suffix-masked, sparsely energy-interpolated full-vocabulary objective. For readability, the algorithm writes the interpolation in its equivalent normalized energy form; the implementation performs the same operation by interpolating logits before the final softmax.

\begin{algorithm*}[t]
\caption{PAINT: complete adaptive privileged re-scoring recipe used in our main experiments}
\label{alg:adaptive_opsd}
\begin{algorithmic}[1]
\Require Reasoning dataset $\mathcal{S} = \{(x_i, y_i^\star)\}_{i=1}^N$, initialized model $p_{\theta_0}$, student policy $q_{\mathrm{base}}$, fixed privileged teacher $q_{\mathrm{aug}}$, anchor extractor $\mathcal{A}$, mask ratios $\rho_{\mathrm{correct}} > \rho_{\mathrm{wrong}}$, sparse interpolation parameters $(k_{\mathrm{base}}, f_{\mathrm{top}})$, clipping threshold $\tau$
\While{not converged}
    \State Sample a minibatch $\mathcal{B} \subset \mathcal{S}$
    \ForAll{$(x, y^\star) \in \mathcal{B}$}
        \State Sample one student rollout $\hat{y} \sim q_{\mathrm{base}}(\cdot \mid x)$
        \State Compute anchor-overlap score $\alpha \gets |\{a\in\mathcal{A}(y^\star):a\subseteq\operatorname{Norm}(\hat{y})\}|/|\mathcal{A}(y^\star)|$
        \State Set mask ratio $\rho \gets \rho_{\mathrm{wrong}}+(\rho_{\mathrm{correct}}-\rho_{\mathrm{wrong}})\alpha$
        \State Build masked privileged trace $\tilde{y}^\star \gets M_{\mathrm{tail}}(y^\star; \rho)$
        \For{$n = 1, \dots, |\hat{y}|$}
            \State $p_S^n \gets q_{\mathrm{base}}(\cdot \mid x, \hat{y}_{<n})$ and $p_T^n \gets q_{\mathrm{aug}}(\cdot \mid x, \tilde{y}^\star, \hat{y}_{<n})$
            \State $E_S^n \gets -\log p_S^n$ and $E_T^n \gets -\log p_T^n$
            \State Compute $H_S^n$, $H_T^n$, and $R_n \gets H_T^n/H_S^n$
        \EndFor
        \State Let $\mathcal{I}$ be the top $f_{\mathrm{top}}$ fraction of generated-token positions by $R_n$
        \State Set $\tilde{E}_T^n\gets(1-k_{\mathrm{base}})E_T^n+k_{\mathrm{base}}E_S^n$ for $n\in\mathcal{I}$, else $\tilde{E}_T^n\gets E_T^n$
        \State $\tilde{p}_T^n(v)\gets\exp(-\tilde{E}_T^n(v))/\sum_{u\in\mathcal{V}}\exp(-\tilde{E}_T^n(u))$
        \State $\ell(x,y^\star) \gets \widetilde{\mathcal{D}_{\mathrm{roll}}}(\hat{y}; x, \tilde{y}^\star)$
    \EndFor
    \State Update the student parameters underlying $q_{\mathrm{base}}$ using $\frac{1}{|\mathcal{B}|}\sum_{(x,y^\star)\in\mathcal{B}}\ell(x,y^\star)$
\EndWhile
\end{algorithmic}
\end{algorithm*}

The trainer also includes optional variants such as EMA teachers, sampled-token objectives, and reason-first teachers, but they are not part of the reported main recipe.

\section{Experimental Details}
\label{sec:experimental_details}

Unless otherwise stated, all privileged on-policy distillation runs use the same post-training setting. We train LoRA adapters with learning rate $5\times10^{-6}$, effective batch size $32$, rank $r=64$, $\alpha=128$, and target modules q\_proj, k\_proj, v\_proj, o\_proj, gate\_proj, up\_proj, and down\_proj. Each update samples one base-policy completion per prompt with sampling temperature $1.1$ and maximum completion length $1024$, and scores the same prefixes with the fixed privileged teacher. For PAINT, we apply suffix masking with $\rho_{\mathrm{wrong}}=0.30$ and $\rho_{\mathrm{correct}}=0.40$, use sparse teacher-energy interpolation with $k_{\mathrm{base}}=0.03$ and $f_{\mathrm{top}}=0.03$, and clip the pointwise forward-KL term at $\tau=0.06$. The vanilla prior self-distillation baseline removes these PAINT-specific adaptive masking and sparse interpolation components while keeping the same training budget and teacher anchoring. We train for $100$ steps and evaluate checkpoints every $20$ steps. Evaluation follows the Qwen3 recommended sampling setup with thinking mode enabled, temperature $1.0$, top-p $0.95$, top-k $-1$, min-p $0.0$, no presence penalty, maximum generation length $38{,}912$, and $12$ samples per prompt.

\section{Ablation Studies}
\label{sec:ablation_studies}

All ablations in this section use Qwen3-1.7B and report AIME 2024 Avg@12. The first two sweeps isolate the masking design with sparse interpolation disabled; the final sweep fixes the best masking setting and ablates sparse interpolation.

\subsection{Mask-Ratio Selection}
\label{sec:mask_ratio_ablation}

The adaptive masking schedule uses two endpoints: $\rho_{\mathrm{wrong}}$, applied when the rollout has little overlap with the verified solution, and $\rho_{\mathrm{correct}}$, applied when the rollout already matches more reference anchors. We report the sweep as $(\rho_{\mathrm{correct}}, \rho_{\mathrm{wrong}})$, with the actual per-example ratio computed as $\rho(\alpha)=\rho_{\mathrm{wrong}}+(\rho_{\mathrm{correct}}-\rho_{\mathrm{wrong}})\alpha$. Table~\ref{tab:mask_ratio_ablation} evaluates this choice with suffix masking and no sparse interpolation.

\begin{table}[htbp]
\centering
\caption{Ablation of the adaptive suffix-mask ratios with suffix masking and sparse interpolation disabled. Rows with equal endpoints correspond to a fixed mask ratio, while unequal endpoints use the overlap-adaptive schedule.}
\label{tab:mask_ratio_ablation}
\vspace{1mm}
\begin{tabular}{@{}ccc@{}}
\toprule
$\boldsymbol{\rho_{\mathrm{correct}}}$ & $\boldsymbol{\rho_{\mathrm{wrong}}}$ & \textbf{AIME 2024} \\
\midrule
0.00 & 0.00 & 57.2 \\
0.25 & 0.25 & 57.8 \\
0.50 & 0.50 & 56.9 \\
0.75 & 0.75 & 55.8 \\
0.50 & 0.25 & 58.1 \\
0.50 & 0.30 & 57.8 \\
\rowcolor{gray!20}
0.40 & 0.30 & \textbf{58.6} \\
\bottomrule
\end{tabular}
\end{table}

The sweep supports the conservative adaptive schedule used in the main experiments. Revealing the full solution is strong but leaves little context control, while a large fixed hidden ratio can remove too much privileged information. Moderate adaptive masking performs best in this sweep: low-overlap rollouts still receive enough visible solution context for correction, whereas high-overlap rollouts hide slightly more of the suffix to avoid over-committing to a single reference trace.

\subsection{Mask Placement Strategy}
\label{sec:mask_strategy_ablation}

We next fix the adaptive endpoints to the main setting, $(\rho_{\mathrm{correct}},\rho_{\mathrm{wrong}})=(0.40,0.30)$, and ablate where the contiguous block is hidden. Table~\ref{tab:mask_strategy_ablation} compares prefix, middle, random contiguous-span, and suffix masking with sparse interpolation disabled.

\begin{table}[htbp]
\centering
\caption{Mask-placement ablation with the default adaptive endpoints fixed and sparse interpolation disabled.}
\label{tab:mask_strategy_ablation}
\vspace{1mm}
\begin{tabular}{@{}lc@{}}
\toprule
\textbf{Mask placement} & \textbf{AIME 2024} \\
\midrule
Prefix mask & 56.1 \\
Middle mask & 57.8 \\
Random contiguous mask & 57.2 \\
\rowcolor{gray!20}
Suffix mask & \textbf{58.6} \\
\bottomrule
\end{tabular}
\end{table}

Suffix masking is the strongest placement in this sweep. This is consistent with the visible-context argument in \Cref{thm:method_suffix_masking_optimality}: early reference tokens often contain setup, decomposition, and high-level plan information, so preserving them gives the privileged scorer a coherent context while still hiding downstream execution details.

\subsection{Sparse Interpolation Strength and Budget}
\label{sec:interpolation_ablation}

Finally, we fix suffix masking and the adaptive endpoints to $(\rho_{\mathrm{correct}},\rho_{\mathrm{wrong}})=(0.40,0.30)$, then vary the interpolation strength $k_{\mathrm{base}}$ and the selected-token fraction $f_{\mathrm{top}}$. Table~\ref{tab:interpolation_ablation} reports the resulting AIME 2024 Avg@12 scores. The row $(k_{\mathrm{base}},f_{\mathrm{top}})=(0,0)$ disables interpolation, while $f_{\mathrm{top}}=1.00$ applies interpolation at every generated-token position.

\begin{table}[htbp]
\centering
\caption{Ablation of sparse energy interpolation with suffix masking and default adaptive endpoints fixed.}
\label{tab:interpolation_ablation}
\vspace{1mm}
\begin{tabular}{@{}ccc@{}}
\toprule
$\boldsymbol{k_{\mathrm{base}}}$ & $\boldsymbol{f_{\mathrm{top}}}$ & \textbf{AIME 2024} \\
\midrule
0.00 & 0.00 & 58.6 \\
0.01 & 1.00 & 58.6 \\
0.03 & 1.00 & 58.9 \\
0.05 & 1.00 & 57.8 \\
0.03 & 0.01 & 59.2 \\
\rowcolor{gray!20}
0.03 & 0.03 & \textbf{59.7} \\
0.03 & 0.05 & 58.9 \\
\bottomrule
\end{tabular}
\end{table}

Sparse interpolation improves over the masking-only setting, but applying it everywhere is less effective. The best setting uses a moderate interpolation strength and a small selected-position budget, supporting PAINT's design choice to calibrate only entropy-mismatch positions rather than all stylistic continuation positions.

\input{sections/theory}

%% file: sections/theory.tex
\section{Formal Justification for Adaptive Masking, Privileged Teaching, Energy Fusion, and Suffix Masking}
\label{sec:suffix_masking_theory}

\begin{proposition}[Autoregressive--Gibbs correspondence]
\label{prop:arm_gibbs_correspondence}
Fix a condition $x$, a finite vocabulary $\mathcal{V}$, and a finite horizon $T$; variable-length generation can be handled by including an EOS token and padding to the terminal horizon. Let $s_i=(x,a_{<i})$ be the autoregressive state at depth $i$. 

If a strictly positive autoregressive model $\pi$ assigns
\[
\pi(\tau\mid x)=\prod_{i=1}^T \pi(a_i\mid s_i)
\quad
\text{for } \tau=(a_1,\ldots,a_T),
\]
then the local costs
\[
c_\pi(s_i,a_i)=-\log \pi(a_i\mid s_i)
\]
define an additive energy $E_\pi(\tau)=\sum_{i=1}^T c_\pi(s_i,a_i)$ whose Gibbs distribution equals $\pi$:
\[
\frac{\exp(-E_\pi(\tau))}{\sum_{\tau'}\exp(-E_\pi(\tau'))}
=
\pi(\tau\mid x).
\]
Conversely, any additive energy $E(\tau)=\sum_{i=1}^T c(s_i,a_i)$ defines an autoregressive policy
\[
\pi_E(a_i\mid s_i)
=
\exp\!\left(-c(s_i,a_i)+V(s_{i+1})-V(s_i)\right),
\]
where
\[
V(s_i)=\log \sum_{a_i,\ldots,a_T}\exp\!\left(-\sum_{j=i}^T c(s_j,a_j)\right),
\qquad
V(s_{T+1})=0.
\]
\end{proposition}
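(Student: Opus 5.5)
The proof splits into the two directions of the statement, and both are finite-sum manipulations on the generation tree.

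\textbf{Forward direction.} Since $\pi$ is strictly positive, the costs $c_\pi(s_i,a_i)=-\log\pi(a_i\mid s_i)$ are finite. Therefore
\[
\exp(-E_\pi(\tau))=\prod_{i=1}^T\pi(a_i\mid s_i)=\pi(\tau\mid x).
\]
The partition function is $Z=\sum_{\tau'}\pi(\tau'\mid x)$. To show $Z=1$, I sum out $a_T,a_{T-1},\ldots,a_1$ in turn, using $\sum_{a}\pi(a\mid s)=1$ at every state. So the Gibbs distribution is $\pi$ itself. Variable length is harmless: after EOS, the padded tokens are deterministic, so their conditionals equal one and the argument is unchanged.

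\textbf{Converse: two properties of $V$.} Given an arbitrary additive energy with finite costs, I would first establish:
\begin{enumerate}
\item[(i)] the soft Bellman recursion $V(s_i)=\log\sum_{a_i}\exp\bigl(-c(s_i,a_i)+V(s_{i+1})\bigr)$ with $V(s_{T+1})=0$;
\item[(ii)] that $V(s_i)$ is finite for every reachable state.
\end{enumerate}
The recursion in (i) comes from factoring the inner sum in the definition of $V(s_i)$. I split off the first action $a_i$, note that $s_{i+1}=(s_i,a_i)$ is determined by it, and recognise the remaining sum over $a_{i+1},\ldots,a_T$ as $\exp V(s_{i+1})$.

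\textbf{Converse: $\pi_E$ is a policy with the right law.} Normalization is immediate from (i):
\[
\sum_{a_i}\pi_E(a_i\mid s_i)=e^{-V(s_i)}\sum_{a_i}e^{-c(s_i,a_i)+V(s_{i+1})}=1,
\]
and positivity is clear. Multiplying along a trajectory, the value terms telescope:
\[
\prod_{i=1}^T\pi_E(a_i\mid s_i)=\exp\!\bigl(-E(\tau)+V(s_{T+1})-V(s_1)\bigr)=\exp(-E(\tau))/Z,
\]
where $Z=e^{V(s_1)}=\sum_{\tau'}e^{-E(\tau')}$. Hence $\pi_E$ is exactly the Gibbs distribution of $E$, which is the content of the converse. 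The same computation recovers $c_\pi$ when applied to $E_\pi$, since then $V\equiv0$.

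\textbf{Main obstacle.} The hard part is mostly bookkeeping: $V$ must be well defined and the recursion must be exact. That requires a finite vocabulary and horizon, so all sums are finite, and finite costs, so $V$ is never $-\infty$. I would state explicitly that costs take values in $\mathbb{R}$. I would also note that the terminal condition $V(s_{T+1})=0$ handles EOS padding once post-EOS costs are set to zero on the forced padding token.
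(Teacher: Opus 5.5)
Your proposal is correct and follows essentially the same route as the paper: the forward direction via $\exp(-E_\pi(\tau))=\pi(\tau\mid x)$ with unit partition function, and the converse via the factorization $Z(s_i,a_i)=\exp(-c(s_i,a_i))Z(s_{i+1})$, which is exactly your soft Bellman recursion. The paper obtains $\pi_E$ as the Gibbs conditional $Z(s_i,a_i)/Z(s_i)$, whereas you define $\pi_E$ by the formula and check normalization and the telescoping product directly; this is a minor difference that makes the final identification with the Gibbs law fully explicit.
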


\begin{proof}
For the first direction,
\[
\exp(-E_\pi(\tau))
=
\prod_{i=1}^T \pi(a_i\mid s_i)
=
\pi(\tau\mid x).
\]
Since $\pi$ is normalized over all length-$T$ trajectories, the Gibbs partition function is
\[
\sum_{\tau'}\exp(-E_\pi(\tau'))=\sum_{\tau'}\pi(\tau'\mid x)=1,
\]
so the Gibbs distribution is exactly $\pi$.

For the converse direction, define the state partition function
\[
Z(s_i)=\sum_{a_i,\ldots,a_T}\exp\!\left(-\sum_{j=i}^T c(s_j,a_j)\right),
\qquad
V(s_i)=\log Z(s_i).
\]
For a fixed action $a_i$,
\[
Z(s_i,a_i)
=
\sum_{a_{i+1},\ldots,a_T}
\exp\!\left(-c(s_i,a_i)-\sum_{j=i+1}^T c(s_j,a_j)\right)
=
\exp(-c(s_i,a_i))Z(s_{i+1}).
\]
Therefore the conditional probability of choosing $a_i$ under the Gibbs distribution is
\[
\pi_E(a_i\mid s_i)
=
\frac{Z(s_i,a_i)}{Z(s_i)}
=
\exp\!\left(-c(s_i,a_i)+V(s_{i+1})-V(s_i)\right).
\]
The conditionals sum to one by construction, so they define an autoregressive policy whose trajectory distribution is the Gibbs distribution.
\end{proof}

\begin{theorem}[Monotonicity of the optimal adaptive mask ratio]
\label{thm:appendix_adaptive_mask_monotonicity}
Let $\mathcal{X}$ be a finite set of valid continuations and let $\rho\in[0,1)$ denote a continuous hidden ratio for a contiguous masked block. Let
\[
q_\rho(\tau)
=
(1-\rho)q_0(\tau)+\rho q_1(\tau),
\qquad
\tau\in\mathcal{X},
\]
where $q_0,q_1\in\Delta(\mathcal{X})$, $q_0\neq q_1$, and $q_\rho(\tau)>0$ on the relevant support. For student parameters $\theta$, define
\[
M_2(q,\theta)
=
\mathbb{E}_{\tau\sim q}
\left[
\|\nabla_\theta \log p_\theta(\tau)\|^2
\right],
\qquad
v_0=M_2(q_0,\theta),
\qquad
\epsilon(\theta)=M_2(q_1,\theta)-M_2(q_0,\theta).
\]
Let
\[
H(\rho)
=
-\sum_{\tau\in\mathcal{X}} q_\rho(\tau)\log q_\rho(\tau),
\]
and consider the objective
\[
\mathcal{L}(\rho,\epsilon)
=
\beta H(\rho)-\lambda(v_0+\rho\epsilon),
\qquad
\beta,\lambda>0.
\]
Whenever an interior maximizer $\rho^\star(\epsilon)\in(0,1)$ exists, its differentiable branch is a strictly decreasing function of the alignment-error proxy:
\[
\frac{d\rho^\star}{d\epsilon}<0.
\]
\end{theorem}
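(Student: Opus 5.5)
The plan is a standard implicit-function argument applied to the first-order condition.

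\textbf{Derivatives of the objective.} Write $d(\tau)=q_1(\tau)-q_0(\tau)$, so that $q_\rho'=d$ and $q_\rho''=0$. On the support where $q_\rho>0$, differentiating the entropy gives
\[
H'(\rho)=-\sum_\tau d(\tau)\bigl(\log q_\rho(\tau)+1\bigr)=-\sum_\tau d(\tau)\log q_\rho(\tau),
\]
where the constant term vanishes because $\sum_\tau d(\tau)=0$. Differentiating once more gives
\[
H''(\rho)=-\sum_\tau \frac{d(\tau)^2}{q_\rho(\tau)}.
\]
Since $q_0\neq q_1$, some $d(\tau)\neq 0$, and therefore $H''(\rho)<0$ strictly. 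Hence $\mathcal{L}$ is strictly concave in $\rho$, with $\partial_\rho\mathcal{L}=\beta H'(\rho)-\lambda\epsilon$ and $\partial_\rho^2\mathcal{L}=\beta H''(\rho)<0$.

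\textbf{Sign of the derivative.} An interior maximizer $\rho^\star(\epsilon)\in(0,1)$ satisfies the first-order condition
\[
F(\rho,\epsilon)\;\triangleq\;\beta H'(\rho)-\lambda\epsilon=0.
\]
Because $\partial_\rho F=\beta H''(\rho^\star)<0$ is nonzero, the implicit function theorem gives a $C^1$ branch $\rho^\star(\epsilon)$ locally, and along it
\[
\frac{d\rho^\star}{d\epsilon}=-\frac{\partial_\epsilon F}{\partial_\rho F}=\frac{\lambda}{\beta H''(\rho^\star)}<0.
\]
Equivalently, $H'$ is strictly decreasing, so $\rho^\star=(H')^{-1}(\lambda\epsilon/\beta)$ with $H'$ a strictly decreasing function. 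This gives monotonicity even without assuming differentiability a priori.

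\textbf{Main obstacle.} The one delicate point is justifying differentiability and strict concavity when some $q_\rho(\tau)$ could vanish. I would handle this by restricting to the support where $q_\rho>0$, as the hypothesis allows. On that support the $\log$ terms are smooth in $\rho$. Any $\tau$ with $q_0(\tau)=q_1(\tau)=0$ contributes nothing to the entropy, and any other $\tau$ has $q_\rho(\tau)>0$ for $\rho\in(0,1)$. Strictness of $H''<0$ then requires $d\not\equiv 0$ on that support, which follows from $q_0\neq q_1$.
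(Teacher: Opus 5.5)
Your proposal is correct and follows the paper's proof: both use $H''(\rho)=-\sum_\tau (q_1(\tau)-q_0(\tau))^2/q_\rho(\tau)<0$ and apply the implicit function theorem to the first-order condition $\beta H'(\rho^\star)-\lambda\epsilon=0$, giving $d\rho^\star/d\epsilon=\lambda/(\beta H''(\rho^\star))<0$. Your additional remark that $H'$ is strictly decreasing on $(0,1)$, so that $\rho^\star=(H')^{-1}(\lambda\epsilon/\beta)$, and your explicit treatment of the support are small refinements over the paper; they also give uniqueness of the interior maximizer and strict monotonicity without presupposing a differentiable branch.
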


\begin{proof}
By linearity of expectation under a convex combination of measures,
\[
\begin{aligned}
M_2(q_\rho,\theta)
&=
\mathbb{E}_{\tau\sim q_\rho}
\left[
\|\nabla_\theta \log p_\theta(\tau)\|^2
\right]
\\
&=
(1-\rho)M_2(q_0,\theta)+\rho M_2(q_1,\theta)
\\
&=
(1-\rho)v_0+\rho(v_0+\epsilon)
=
v_0+\rho\epsilon.
\end{aligned}
\]
Thus the objective can be written as
\[
\mathcal{L}(\rho,\epsilon)
=
\beta H(\rho)-\lambda(v_0+\rho\epsilon).
\]

Next, because $q_\rho$ is affine in $\rho$, its Shannon entropy is strictly concave along this path. Differentiating twice gives
\[
H''(\rho)
=
-\sum_{\tau\in\mathcal{X}}
\frac{(q_1(\tau)-q_0(\tau))^2}{q_\rho(\tau)}
<0,
\]
where strict negativity follows because $q_\rho(\tau)>0$ on the relevant support and $q_0\neq q_1$.

For an interior maximizer $\rho^\star(\epsilon)$, the first-order condition is
\[
\frac{\partial \mathcal{L}}{\partial \rho}
=
\beta H'(\rho^\star)-\lambda\epsilon
=
0.
\]
Define
\[
\mathcal{G}(\rho,\epsilon)
=
\beta H'(\rho)-\lambda\epsilon.
\]
Since
\[
\frac{\partial \mathcal{G}}{\partial \rho}
=
\beta H''(\rho^\star)\neq 0,
\]
the implicit function theorem applies and yields
\[
\frac{d\rho^\star}{d\epsilon}
=
-\frac{\partial \mathcal{G}/\partial \epsilon}
{\partial \mathcal{G}/\partial \rho}
=
-\frac{-\lambda}{\beta H''(\rho^\star)}
=
\frac{\lambda}{\beta H''(\rho^\star)}.
\]
Because $\lambda>0$, $\beta>0$, and $H''(\rho^\star)<0$, the ratio is strictly negative:
\[
\frac{d\rho^\star}{d\epsilon}<0.
\]
This proves the claim.
\end{proof}

\begin{assumption}[Energy-space sensor model]
\label{assump:energy_sensor_fusion}
Fix a rollout position $n$ and a prefix $\hat{y}_{<n}$. There is an ideal latent token-energy function $E_\star^n(v)$ for the desired next-token target. The student and privileged teacher energies are conditionally independent noisy observations of this latent energy:
\[
E_S^n(v)=E_\star^n(v)+\epsilon_S^n(v),
\qquad
E_T^n(v)=E_\star^n(v)+\epsilon_T^n(v),
\]
where $\epsilon_S^n(v)\sim\mathcal{N}(0,(\sigma_S^n)^2)$ and $\epsilon_T^n(v)\sim\mathcal{N}(0,(\sigma_T^n)^2)$.
\end{assumption}

\begin{proposition}[Precision-weighted geometric calibration]
\label{prop:precision_weighted_calibration}
Under \Cref{assump:energy_sensor_fusion}, assume a flat prior over $E_\star^n(v)$ and define $E_S^n(v)=-\log p_S^n(v)$ and $E_T^n(v)=-\log p_T^n(v)$ up to token-independent constants. Then the MAP estimate of the latent energy is
\[
E_{\mathrm{MAP}}^n(v)
=
k_n^\star E_S^n(v)
+
(1-k_n^\star)E_T^n(v),
\qquad
k_n^\star
=
\frac{(\sigma_T^n)^2}{(\sigma_T^n)^2+(\sigma_S^n)^2}.
\]
The normalized distribution induced by this energy is
\[
p_{\mathrm{MAP}}^n(v)
\propto
\bigl(p_S^n(v)\bigr)^{k_n^\star}
\bigl(p_T^n(v)\bigr)^{1-k_n^\star}.
\]
\end{proposition}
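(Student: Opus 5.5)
The plan is to treat each token $v$ separately and view the problem as a Gaussian likelihood maximization in the single unknown $E_\star^n(v)$. Under \Cref{assump:energy_sensor_fusion}, conditional independence means the joint likelihood of the two observations $(E_S^n(v),E_T^n(v))$ factorizes into two Gaussian densities centered at $E_\star^n(v)$, with variances $(\sigma_S^n)^2$ and $(\sigma_T^n)^2$. With a flat prior the posterior is proportional to this likelihood, so the MAP estimate is the maximizer of
\[
-\frac{(E_S^n(v)-E)^2}{2(\sigma_S^n)^2}-\frac{(E_T^n(v)-E)^2}{2(\sigma_T^n)^2}
\]
over $E$.

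This function is a strictly concave quadratic in $E$. Setting its derivative to zero gives the precision-weighted mean
\[
E_{\mathrm{MAP}}^n(v)=\frac{E_S^n(v)/(\sigma_S^n)^2+E_T^n(v)/(\sigma_T^n)^2}{1/(\sigma_S^n)^2+1/(\sigma_T^n)^2}.
\]
Multiplying numerator and denominator by $(\sigma_S^n)^2(\sigma_T^n)^2$ rewrites the weight on $E_S^n(v)$ as $k_n^\star=(\sigma_T^n)^2/((\sigma_T^n)^2+(\sigma_S^n)^2)$, and the weight on $E_T^n(v)$ as $1-k_n^\star$. A flat prior is improper, so I would note that the posterior is nonetheless proper because it is Gaussian in $E$.

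The one point needing care is the phrase ``up to token-independent constants.'' Suppose $E_S^n=-\log p_S^n+c_S$ and $E_T^n=-\log p_T^n+c_T$. Then the MAP energy equals $-k_n^\star\log p_S^n-(1-k_n^\star)\log p_T^n$ plus the constant $k_n^\star c_S+(1-k_n^\star)c_T$. This constant is token-independent because the weights $k_n^\star$ depend only on $n$ and not on $v$. Exponentiating $-E_{\mathrm{MAP}}^n$ and normalizing over $\mathcal{V}$, as in Eq.~\ref{eq:teacher_interp}, removes the constant. What remains is $p_{\mathrm{MAP}}^n(v)\propto (p_S^n(v))^{k_n^\star}(p_T^n(v))^{1-k_n^\star}$.

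I expect no real obstacle here. The only subtlety worth flagging is the shift-invariance step: the weights must be token-independent for the normalization to absorb the constants, which holds because the variances are indexed by position only.
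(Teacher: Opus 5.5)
Your proposal is correct and follows the paper's proof essentially step for step: a per-token Gaussian likelihood under a flat prior, the first-order condition giving the precision-weighted mean, rewriting the weights as $k_n^\star$ and $1-k_n^\star$, and absorbing the token-independent constants $k_n^\star c_S+(1-k_n^\star)c_T$ into the softmax normalizer. You also state explicitly that this absorption works only because $k_n^\star$ depends on $n$ and not on $v$; the paper relies on this without saying so.
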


\begin{proof}
For a fixed token $v$, the conditional likelihood of the observed energies is
\[
p(E_S^n(v),E_T^n(v)\mid E_\star^n(v))
\propto
\exp\!\left(
-\frac{(E_S^n(v)-E_\star^n(v))^2}{2(\sigma_S^n)^2}
-\frac{(E_T^n(v)-E_\star^n(v))^2}{2(\sigma_T^n)^2}
\right).
\]
With a flat prior, maximizing the posterior is the same as maximizing the log-likelihood
\[
\mathcal{L}(E_\star^n(v))
=
-\frac{(E_S^n(v)-E_\star^n(v))^2}{2(\sigma_S^n)^2}
-\frac{(E_T^n(v)-E_\star^n(v))^2}{2(\sigma_T^n)^2}
+C.
\]
Setting the derivative to zero gives
\[
\frac{E_S^n(v)-E_\star^n(v)}{(\sigma_S^n)^2}
+
\frac{E_T^n(v)-E_\star^n(v)}{(\sigma_T^n)^2}
=0,
\]
hence
\[
E_{\mathrm{MAP}}^n(v)
=
\frac{\frac{1}{(\sigma_S^n)^2}E_S^n(v)+\frac{1}{(\sigma_T^n)^2}E_T^n(v)}
{\frac{1}{(\sigma_S^n)^2}+\frac{1}{(\sigma_T^n)^2}}
=
\frac{(\sigma_T^n)^2}{(\sigma_T^n)^2+(\sigma_S^n)^2}E_S^n(v)
+
\frac{(\sigma_S^n)^2}{(\sigma_T^n)^2+(\sigma_S^n)^2}E_T^n(v).
\]
This proves the precision-weighted energy formula. Finally, substituting $E_S^n(v)=-\log p_S^n(v)+c_S$ and $E_T^n(v)=-\log p_T^n(v)+c_T$ yields
\[
\exp(-E_{\mathrm{MAP}}^n(v))
\propto
\bigl(p_S^n(v)\bigr)^{k_n^\star}
\bigl(p_T^n(v)\bigr)^{1-k_n^\star},
\]
where the token-independent constants are absorbed into the normalizer. Normalizing over $v\in\mathcal{V}$ gives the stated distribution.
\end{proof}

\begin{proposition}[Entropy-ratio sparse selection]
\label{prop:entropy_ratio_sparse_selection}
Fix an interpolation strength $k_0\in(0,1)$. At a rollout position $n$, suppose the teacher and student define independent local Gaussian measures over a scalar energy coordinate,
\[
\pi_T^n(E)\propto
\exp\!\left(-\frac{(E-\mu_T^n)^2}{2(\sigma_T^n)^2}\right),
\qquad
\pi_S^n(E)\propto
\exp\!\left(-\frac{(E-\mu_S^n)^2}{2(\sigma_S^n)^2}\right),
\]
with precisions $\tau_T^n=1/(\sigma_T^n)^2$ and $\tau_S^n=1/(\sigma_S^n)^2$. Let
\[
\pi_k^n(E)
\propto
\bigl(\pi_T^n(E)\bigr)^{1-k}
\bigl(\pi_S^n(E)\bigr)^k
\]
be the geometric interpolation. If $H_S^n,H_T^n>0$ and the local energy variances obey a shared entropy--variance equation of state
\[
(\sigma_A^n)^2=c_n\bigl(H_A^n\bigr)^\gamma,
\qquad A\in\{S,T\},\quad c_n>0,\quad \gamma>0,
\]
then the signed differential-entropy reduction from applying $k_0$ is
\[
\Delta I_n(k_0)
=
h(\pi_T^n)-h(\pi_{k_0}^n)
=
\frac{1}{2}\log\!\left(1-k_0+k_0 R_n^\gamma\right),
\qquad
R_n=\frac{H_T^n}{H_S^n}.
\]
Moreover, $\Delta I_n(k_0)$ is strictly increasing in $R_n$; it is positive exactly when the interpolated Gaussian has lower entropy than the teacher Gaussian. Consequently, for any finite valid-position set $\mathcal{J}$ and any budget $S$, the problem
\[
\begin{aligned}
\max_{\mathbf{m}\in\{0,1\}^{|\mathcal{J}|}}
\quad&
\sum_{n\in\mathcal{J}}m_n\,\Delta I_n(k_0)
\\
\mathrm{s.t.}\quad&
\sum_{n\in\mathcal{J}}m_n=S
\end{aligned}
\]
is solved by selecting the $S$ largest entropy ratios $R_n$, with arbitrary tie-breaking at the cutoff.
\end{proposition}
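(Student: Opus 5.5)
The plan is to compute everything in closed form, using the fact that geometric interpolation of Gaussians adds precisions. First I will take logarithms of the unnormalized densities:
\[
(1-k_0)\log\pi_T^n(E)+k_0\log\pi_S^n(E)
=
-\frac{(1-k_0)\tau_T^n(E-\mu_T^n)^2+k_0\tau_S^n(E-\mu_S^n)^2}{2}+\text{const}.
\]
Completing the square shows that $\pi_{k_0}^n$ is a Gaussian. Its precision is $\tau_{k_0}^n=(1-k_0)\tau_T^n+k_0\tau_S^n$, and its mean is the precision-weighted mean of $\mu_T^n$ and $\mu_S^n$. Since $k_0\in(0,1)$ and both precisions are positive, $\tau_{k_0}^n>0$, so the interpolant is a proper normalizable measure. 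Its mean plays no role in what follows.

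Next I will use the Gaussian entropy formula $h=\tfrac12\log(2\pi e)-\tfrac12\log\tau$. The additive constants cancel in the difference, giving
\[
\Delta I_n(k_0)=h(\pi_T^n)-h(\pi_{k_0}^n)=\tfrac12\log\frac{\tau_{k_0}^n}{\tau_T^n}
=\tfrac12\log\!\Bigl(1-k_0+k_0\frac{(\sigma_T^n)^2}{(\sigma_S^n)^2}\Bigr).
\]
Substituting the equation of state $(\sigma_A^n)^2=c_n(H_A^n)^\gamma$ cancels $c_n$, and the variance ratio becomes $(H_T^n/H_S^n)^\gamma=R_n^\gamma$. This is well defined because $H_S^n,H_T^n>0$, which yields the stated formula. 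The argument of the logarithm is strictly positive, since $1-k_0>0$ and $R_n^\gamma>0$.

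For monotonicity I will note that $R\mapsto R^\gamma$ is strictly increasing on $(0,\infty)$ for $\gamma>0$. Multiplying by $k_0>0$, adding a constant, and applying the strictly increasing $\tfrac12\log$ preserves strictness. Positivity then reads $\Delta I_n>0$ iff $1-k_0+k_0R_n^\gamma>1$ iff $R_n>1$. By the definition of $\Delta I_n$, this is the same as $h(\pi_{k_0}^n)<h(\pi_T^n)$.

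For the selection claim I will use a standard exchange argument on the budget problem. Suppose a feasible $\mathbf m$ with $\sum m_n=S$ selects some $n$ while omitting some $n'$ with $R_{n'}>R_n$. Swapping them keeps the budget and, by strict monotonicity, strictly increases the objective. An optimum therefore contains no such pair. Any two sets of the $S$ largest ratios differ only by indices with tied $R$ values at the cutoff, and tied ratios give equal $\Delta I$. Hence all such sets attain the same objective value, and tie-breaking is arbitrary.

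I expect no genuine obstacle, since every step is elementary. The only points needing care are, first, confirming that the geometric mixture is normalizable. Second, the "independent" hypothesis is not actually used, because the interpolation is a pointwise product of densities in the same coordinate. Third, the edge cases $S=0$ and $S=|\mathcal J|$ are trivially covered by the exchange argument.
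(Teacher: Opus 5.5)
Your proposal is correct and follows essentially the same route as the paper, which also reads off the interpolated precision $(1-k)\tau_T^n+k\tau_S^n$ from the $E^2$ coefficient, uses the Gaussian entropy formula, substitutes the equation of state to get $R_n^\gamma$, and closes with the same swap argument. The differences are minor: the paper proves monotonicity by differentiating $g(x)=\tfrac12\log(1-k_0+k_0e^{\gamma x})$ in $x=\log R_n$ instead of composing increasing maps, and it does not state normalizability or the $R_n>1$ positivity criterion, both of which you add.
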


\begin{proof}
Taking logarithms of the geometric interpolation gives
\[
\log \pi_k^n(E)
=
(1-k)\log \pi_T^n(E)+k\log \pi_S^n(E)+C.
\]
The coefficient of $E^2$ is therefore
\[
-\frac{1}{2}
\left((1-k)\tau_T^n+k\tau_S^n\right).
\]
Since a one-dimensional Gaussian with precision $\tau$ has quadratic coefficient $-\tau/2$, the interpolated precision is
\[
\tau_k^n
=
(1-k)\tau_T^n+k\tau_S^n .
\]
For a one-dimensional Gaussian, $h(\pi)=\frac{1}{2}\log(2\pi e)-\frac{1}{2}\log \tau$. Hence
\[
\Delta I_n(k)
=
h(\pi_T^n)-h(\pi_k^n)
=
\frac{1}{2}\log\frac{\tau_k^n}{\tau_T^n}
=
\frac{1}{2}\log\!\left(1-k+k\frac{\tau_S^n}{\tau_T^n}\right).
\]
The shared entropy--variance law gives
\[
\frac{\tau_S^n}{\tau_T^n}
=
\frac{(\sigma_T^n)^2}{(\sigma_S^n)^2}
=
\frac{c_n(H_T^n)^\gamma}{c_n(H_S^n)^\gamma}
=
R_n^\gamma,
\]
which proves the stated information-gain formula.

It remains to show that the nonlinear score preserves the ordering. Write $x=\log R_n$ and define
\[
g(x)=
\frac{1}{2}\log\!\left(1-k_0+k_0\exp(\gamma x)\right).
\]
Then
\[
g'(x)
=
\frac{1}{2}
\frac{k_0\gamma\exp(\gamma x)}
{1-k_0+k_0\exp(\gamma x)}
>0
\]
because $k_0\in(0,1)$ and $\gamma>0$. Thus $\Delta I_n(k_0)$, $\log R_n$, and $R_n$ induce the same ordering.

Finally, consider any feasible selector that includes a position $a$ but excludes a position $b$ with $\Delta I_b(k_0)>\Delta I_a(k_0)$. Swapping $a$ out and $b$ in strictly increases the objective while preserving the budget. Therefore no optimum can omit a larger score in favor of a smaller one, so every optimum selects the top $S$ scores; since the scores are order-equivalent to $R_n$, this is exactly top-$S$ entropy-ratio selection. Ties at the cutoff give equivalent optima.
\end{proof}

\begin{proposition}[Entropy smoothing]
\label{prop:mask_entropy_smoothing}
Let $y^\star=(y^\star_1,\ldots,y^\star_T)$, let $M_t=y^\star_{t:t+k-1}$, let $C_t=(y^\star_{<t},y^\star_{t+k:T})$, and let $\Xi_t^n=(x,\hat{y}_{<n},C_t)$. For
\[
q_{m,t}^n(a)=\Pr(A_n=a\mid \Xi_t^n,M_t=m),
\qquad
\bar{q}_t^n(a)=\sum_m \rho_t(m\mid\Xi_t^n)q_{m,t}^n(a),
\]
where $\rho_t(m\mid\Xi_t^n)=\Pr(M_t=m\mid\Xi_t^n)$,
\[
H(\bar{q}_t^n)
\ge
\mathbb{E}_{m\sim\rho_t(\cdot\mid\Xi_t^n)}
\left[H(q_{m,t}^n)\right].
\]
Equality holds if and only if $q_{m,t}^n$ is identical for all $m$ with positive posterior probability, up to events of zero probability.
\end{proposition}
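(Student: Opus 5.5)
The inequality is the concavity of Shannon entropy, and I will prove it through the mutual-information identity. Introduce the joint law of $(M_t, A_n)$ given $\Xi_t^n$, with $M_t\sim\rho_t(\cdot\mid\Xi_t^n)$ and $A_n\mid M_t=m\sim q_{m,t}^n$. In this model $\bar q_t^n$ is exactly the marginal of $A_n$ given $\Xi_t^n$, so $H(\bar q_t^n)=H(A_n\mid\Xi_t^n)$. The average $\mathbb{E}_{m}[H(q_{m,t}^n)]$ is $H(A_n\mid M_t,\Xi_t^n)$. Their difference is therefore the conditional mutual information $I(A_n;M_t\mid\Xi_t^n)$, which is nonnegative. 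This is the same quantity $\Gamma(t)$ tracks before taking the outer expectation, which ties the proposition to \Cref{thm:method_suffix_masking_optimality}.

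Concretely, I will verify the identity by direct computation:
\[
H(\bar q)-\sum_m\rho(m)H(q_m)=\sum_m\rho(m)\sum_a q_m(a)\log\frac{q_m(a)}{\bar q(a)}=\sum_m\rho(m)\,\mathrm{KL}(q_m\,\|\,\bar q).
\]
This expresses the gap as a $\rho$-weighted average of KL divergences to the mixture, i.e.\ a generalized Jensen--Shannon divergence. Each term is nonnegative by Gibbs' inequality. Terms with $q_m(a)=0$ use the convention $0\log 0=0$. Whenever $q_m(a)>0$ and $\rho(m)>0$ we have $\bar q(a)>0$, so no division by zero occurs. Sums are finite because the vocabulary is finite; if the posterior over hidden blocks were countable, the same argument applies with absolutely convergent sums, since the entropies are bounded by $\log|\mathcal{V}|$.

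For the equality case, the gap is zero iff $\mathrm{KL}(q_m\|\bar q)=0$ for every $m$ with $\rho(m)>0$, i.e.\ iff $q_m=\bar q$ for all such $m$. That means all positive-posterior $q_{m,t}^n$ coincide. Conversely, if all such $q_m$ equal a common $q$, then $\bar q=q$ and both sides equal $H(q)$. Blocks $m$ with zero posterior mass contribute nothing, which is the ``up to events of zero probability'' clause.

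The only delicate step is the bookkeeping in the equality characterization. Strict positivity of $\mathrm{KL}$ for unequal distributions must be invoked only on the support of $\rho$, and zero-probability $a$ must be handled so that $\bar q(a)=0$ forces $q_m(a)=0$ for every $m$ with $\rho(m)>0$. Beyond that, the argument is the standard chain rule and Gibbs' inequality, with no earlier result of the paper required.
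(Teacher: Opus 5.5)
Your proof is correct, but it takes a different route from the paper. The paper applies Jensen's inequality coordinatewise to the concave scalar function $f(z)=-z\log z$. This gives $f(\bar q_t^n(a))\ge\sum_m\rho_t(m\mid\Xi_t^n)f(q_{m,t}^n(a))$ for each token $a$; the paper then sums over $a$. It reads off the equality case from strict concavity: every coordinate inequality must be tight, which forces $q_{m,t}^n(a)$ to be constant in $m$ on the posterior support. You instead compute the gap exactly as $\sum_m\rho_t(m\mid\Xi_t^n)\,D_{\mathrm{KL}}(q_{m,t}^n\,\|\,\bar q_t^n)=I(A_n;M_t\mid\Xi_t^n)$ and then invoke Gibbs' inequality. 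The paper's argument is shorter and needs only scalar concavity. Yours yields more: it shows the smoothing is exactly the conditional mutual information that appears as the excess term in \Cref{lem:mask_kl_decomposition}. Indeed, your identity is that lemma with $p_S^n$ replaced by $\bar q_t^n$, which satisfies its support condition. The same quantity is also the integrand of $\Gamma(t)$. So entropy smoothing, the forward-KL decomposition, and the suffix-masking criterion are all statements about one quantity. In your route the equality case follows from the equality condition of Gibbs' inequality, with no separate strict-concavity argument. Your support bookkeeping, via $\bar q_t^n(a)\ge\rho_t(m\mid\Xi_t^n)\,q_{m,t}^n(a)$, is exactly what is needed. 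The countable-posterior remark is unnecessary, since $M_t$ ranges over finitely many length-$k$ strings from a finite vocabulary.
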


\begin{proof}
Let $f(z)=-z\log z$, with the usual convention $0\log 0=0$. The function $f$ is concave on $[0,1]$ and strictly concave on $(0,1)$. Since
\[
\bar{q}_t^n(a)=\sum_m \rho_t(m\mid\Xi_t^n)q_{m,t}^n(a),
\]
Jensen's inequality gives, for each token $a$,
\[
f(\bar{q}_t^n(a))
\ge
\sum_m \rho_t(m\mid\Xi_t^n)f(q_{m,t}^n(a)).
\]
Summing over $a\in\mathcal{V}$ yields the entropy inequality. The equality condition follows from the strict concavity of $f$ on the positive-probability coordinates: equality can hold only when the token probabilities agree across all posterior-supported hidden blocks.
\end{proof}

\begin{lemma}[Forward-KL decomposition]
\label{lem:mask_kl_decomposition}
Under the setup of \Cref{prop:mask_entropy_smoothing}, let $p_S^n(a)=q_{\mathrm{base}}(a \mid x,\hat{y}_{<n})$ be the student distribution, with support containing the support of every $q_{m,t}^n$. Then
\[
\mathbb{E}_{m\sim\rho_t(\cdot\mid\Xi_t^n)}
\left[
D_{\mathrm{KL}}(q_{m,t}^n\,\|\,p_S^n)
\right]
=
D_{\mathrm{KL}}(\bar{q}_t^n\,\|\,p_S^n)
+
I(A_n;M_t\mid \Xi_t^n).
\]
\end{lemma}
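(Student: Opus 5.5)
This is the standard compensation identity for the KL divergence. The plan is to expand each KL term pointwise in $a$, insert $\bar q_t^n$ as an intermediate distribution, and then recognize the leftover term as a mutual information.

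For each $m$ with $\rho_t(m\mid\Xi_t^n)>0$ and each token $a$ in the support of $q_{m,t}^n$, I split the log-ratio as
\[
\log\frac{q_{m,t}^n(a)}{p_S^n(a)}
=
\log\frac{q_{m,t}^n(a)}{\bar q_t^n(a)}
+
\log\frac{\bar q_t^n(a)}{p_S^n(a)} .
\]
Two support facts make every term finite.
\begin{itemize}
\item If $q_{m,t}^n(a)>0$ and $\rho_t(m\mid\Xi_t^n)>0$, then $\bar q_t^n(a)\ge \rho_t(m\mid\Xi_t^n)\,q_{m,t}^n(a)>0$, so the first ratio is well defined.
\item The support assumption on $p_S^n$ gives $p_S^n(a)>0$ wherever $\bar q_t^n(a)>0$, so the second ratio is well defined.
\end{itemize}

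Next I multiply by $q_{m,t}^n(a)$, sum over $a$, and average over $m\sim\rho_t(\cdot\mid\Xi_t^n)$. The first term gives
\[
\sum_m\rho_t(m\mid\Xi_t^n)\, D_{\mathrm{KL}}(q_{m,t}^n\,\|\,\bar q_t^n).
\]
In the second term the $m$-sum can be taken inside, because the log-ratio does not depend on $m$. Since $\sum_m \rho_t(m\mid\Xi_t^n)\,q_{m,t}^n(a)=\bar q_t^n(a)$, that term becomes exactly $D_{\mathrm{KL}}(\bar q_t^n\,\|\,p_S^n)$.

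It remains to identify the first term with $I(A_n;M_t\mid\Xi_t^n)$. Conditioned on $\Xi_t^n$, the joint law of $(M_t,A_n)$ is $\rho_t(m\mid\Xi_t^n)\,q_{m,t}^n(a)$, and its $A_n$-marginal is $\bar q_t^n$. The conditional mutual information is the KL divergence from this joint law to the product of its marginals:
\[
\sum_{m,a}\rho_t(m\mid\Xi_t^n)\,q_{m,t}^n(a)\log\frac{q_{m,t}^n(a)}{\bar q_t^n(a)}
=
\mathbb{E}_m\bigl[D_{\mathrm{KL}}(q_{m,t}^n\,\|\,\bar q_t^n)\bigr].
\]
This is the expected-KL form of mutual information, and it closes the identity.

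There is no real obstacle; this is the decomposition underlying the Rao--Blackwell statement. The only points needing care are the support and finiteness bookkeeping above, and reading $I(\cdot\,;\cdot\mid\Xi_t^n)$ pointwise at the realized context $\Xi_t^n$ rather than averaged over it.
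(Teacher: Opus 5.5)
Your proposal is correct and follows the paper's proof essentially step for step. Like the paper, you insert $\bar q_t^n$ into the log-ratio and identify the first piece as $I(A_n;M_t\mid\Xi_t^n)$ under the joint law $\rho_t(m\mid\Xi_t^n)\,q_{m,t}^n(a)$. You then collapse the second piece to $D_{\mathrm{KL}}(\bar q_t^n\,\|\,p_S^n)$ by marginalizing over $m$. Your support and finiteness bookkeeping is a useful addition that the paper leaves implicit.
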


\begin{proof}
Expanding the left-hand side,
\[
\begin{aligned}
&\sum_m \rho_t(m\mid\Xi_t^n)
\sum_a q_{m,t}^n(a)\log\frac{q_{m,t}^n(a)}{p_S^n(a)}
\\
&=
\sum_m \rho_t(m\mid\Xi_t^n)
\sum_a q_{m,t}^n(a)\log\frac{q_{m,t}^n(a)}{\bar{q}_t^n(a)}
+
\sum_m \rho_t(m\mid\Xi_t^n)
\sum_a q_{m,t}^n(a)\log\frac{\bar{q}_t^n(a)}{p_S^n(a)}.
\end{aligned}
\]
The first term is exactly $I(A_n;M_t\mid\Xi_t^n)$ under the joint distribution
\[
\Pr(M_t=m,A_n=a\mid\Xi_t^n)
=
\rho_t(m\mid\Xi_t^n)q_{m,t}^n(a).
\]
For the second term, use $\sum_m\rho_t(m\mid\Xi_t^n)q_{m,t}^n(a)=\bar{q}_t^n(a)$ to obtain
\[
\sum_a \bar{q}_t^n(a)\log\frac{\bar{q}_t^n(a)}{p_S^n(a)}
=
D_{\mathrm{KL}}(\bar{q}_t^n\,\|\,p_S^n).
\]
This proves the identity.
\end{proof}

\begin{proposition}[Rao--Blackwellized gradient variance]
\label{prop:mask_variance}
Under the setup of \Cref{prop:mask_entropy_smoothing}, assume the target distributions are treated as fixed quantities, as in the frozen-teacher implementation. Define
\[
G_{m,t}^n(\theta)
=
\nabla_\theta D_{\mathrm{KL}}(q_{m,t}^n\,\|\,p_S^n),
\qquad
\bar{G}_t^n(\theta)
=
\nabla_\theta D_{\mathrm{KL}}(\bar{q}_t^n\,\|\,p_S^n).
\]
Then
\[
\mathbb{E}_{m\sim\rho_t(\cdot\mid\Xi_t^n)}
\left[G_{m,t}^n(\theta)\right]
=
\bar{G}_t^n(\theta).
\]
Moreover, over the joint randomness of examples, rollout prefixes, positions, and hidden blocks,
\[
\operatorname{Cov}(G_{M_t,t}^n)
=
\operatorname{Cov}(\bar{G}_t^n)
+
\mathbb{E}\!\left[
\operatorname{Cov}(G_{M_t,t}^n\mid \Xi_t^n)
\right],
\]
so the covariance difference is positive semidefinite.
\end{proposition}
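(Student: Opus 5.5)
The unbiasedness identity follows from linearity of the gradient together with the fact that $\bar{q}_t^n$ is affine in the mixture weights. The covariance identity is then the law of total covariance, applied to the random vector $G_{M_t,t}^n$ conditioned on $\Xi_t^n$.

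\textbf{Gradient form.} Since the targets are treated as fixed (frozen teacher, no gradient through $q$), only the cross-entropy term of the forward KL depends on $\theta$:
\[
G_{m,t}^n(\theta)=-\sum_a q_{m,t}^n(a)\,\nabla_\theta\log p_S^n(a),
\qquad
\bar{G}_t^n(\theta)=-\sum_a \bar{q}_t^n(a)\,\nabla_\theta\log p_S^n(a).
\]
The entropy term $\sum_a q\log q$ drops out. Both gradients are therefore the same linear functional $L(q)=-\sum_a q(a)\nabla_\theta\log p_S^n(a)$, applied to different target distributions.

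\textbf{Unbiasedness.} Because $p_S^n$ depends only on $(x,\hat{y}_{<n})$, which is contained in $\Xi_t^n$, the vector $\nabla_\theta\log p_S^n(a)$ is $\Xi_t^n$-measurable. It can therefore be pulled outside the expectation over $m\sim\rho_t(\cdot\mid\Xi_t^n)$. The definition of $\bar{q}_t^n$ as the posterior mixture then gives
\[
\mathbb{E}_m[G_{m,t}^n]=L\Bigl(\sum_m\rho_t(m\mid\Xi_t^n)\,q_{m,t}^n\Bigr)=L(\bar{q}_t^n)=\bar{G}_t^n.
\]
This is the first claim. It is the conditional version $\mathbb{E}[G_{M_t,t}^n\mid\Xi_t^n]=\bar{G}_t^n$, and it requires that the sampling law of $M_t$ given $\Xi_t^n$ is exactly $\rho_t$. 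I take this from the joint model defined in \Cref{prop:mask_entropy_smoothing}.

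\textbf{Covariance decomposition.} I apply the law of total covariance to the random vector $G=G_{M_t,t}^n$, conditioning on $\Xi_t^n$:
\[
\operatorname{Cov}(G)=\operatorname{Cov}\bigl(\mathbb{E}[G\mid\Xi_t^n]\bigr)+\mathbb{E}\bigl[\operatorname{Cov}(G\mid\Xi_t^n)\bigr].
\]
Substituting the previous step for the inner conditional mean gives the stated identity. Each conditional covariance matrix is positive semidefinite, and expectations of PSD matrices are PSD, so $\operatorname{Cov}(G)-\operatorname{Cov}(\bar{G}_t^n)\succeq 0$.

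\textbf{Main obstacle.} The main obstacle is measure-theoretic bookkeeping rather than computation. I must ensure that the ``joint randomness of examples, rollout prefixes, positions, and hidden blocks'' is a single probability space in which $\Xi_t^n$ is a random variable and $M_t\mid\Xi_t^n\sim\rho_t$. I also need second moments of the gradients to be finite so that the covariances exist.

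With a finite vocabulary, finite horizon and strictly positive student (as in \Cref{prop:arm_gibbs_correspondence}), $\nabla_\theta\log p_S^n$ is bounded for fixed $\theta$. I would state this explicitly and handle the random position $n$ by including it in the conditioning variable. The support assumption of \Cref{lem:mask_kl_decomposition} guarantees that all KL terms are finite, so the gradients are well defined.
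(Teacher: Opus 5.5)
Your proposal is correct and follows the paper's proof essentially step for step. Both use the fixed-target cross-entropy form of the gradient and its linearity in the target to get unbiasedness under the posterior mixture $\bar{q}_t^n$, then the law of total covariance conditioned on $\Xi_t^n$ with positive semidefinite conditional covariances; your remarks on a single joint probability space and finite second moments are bookkeeping the paper leaves implicit.
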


\begin{proof}
Because the target is held fixed when differentiating,
\[
G_{m,t}^n(\theta)
=
-\sum_a q_{m,t}^n(a)\nabla_\theta\log p_S^n(a).
\]
Taking the posterior expectation over $m$ gives
\[
\mathbb{E}_m[G_{m,t}^n(\theta)\mid\Xi_t^n]
=
-\sum_a \bar{q}_t^n(a)\nabla_\theta\log p_S^n(a)
=
\bar{G}_t^n(\theta).
\]
Conditioning on the sampled prefixes and treating rollout sampling as fixed for the supervised distillation update, the covariance identity is the law of total covariance applied to the random vector $G_{M_t,t}^n$ with conditioning variable $\Xi_t^n$:
\[
\operatorname{Cov}(G_{M_t,t}^n)
=
\operatorname{Cov}(\mathbb{E}[G_{M_t,t}^n\mid\Xi_t^n])
+
\mathbb{E}[\operatorname{Cov}(G_{M_t,t}^n\mid\Xi_t^n)].
\]
Substituting $\mathbb{E}[G_{M_t,t}^n\mid\Xi_t^n]=\bar{G}_t^n$ gives the stated result. Conditional covariance matrices are positive semidefinite, so the final term is positive semidefinite.
\end{proof}

\begin{lemma}[Visible-context duality]
\label{lem:visible_context_duality}
Let $M_t=y^\star_{t:t+k-1}$ and $C_t=(y^\star_{<t},y^\star_{t+k:T})$ for a fixed masking budget $k$, and define
\[
\Gamma(t)
=
\mathbb{E}_{(x,y^\star),\hat{y},n}
\left[
I(A_n;M_t\mid x,\hat{y}_{<n},C_t)
\right],
\]
where $n$ is sampled uniformly from the rollout positions. Then
\[
\Gamma(t)
=
K
-
\mathbb{E}_{(x,y^\star),\hat{y},n}
\left[
I(A_n;C_t\mid x,\hat{y}_{<n})
\right],
\]
where
\[
K=
\mathbb{E}_{(x,y^\star),\hat{y},n}
\left[
I(A_n;y^\star\mid x,\hat{y}_{<n})
\right]
\]
does not depend on $t$.
\end{lemma}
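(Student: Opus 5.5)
The identity is a pointwise application of the chain rule for mutual information, followed by averaging over examples, rollouts and positions.

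Fix a realization of $(x,y^\star)$, the rollout $\hat{y}$, and a position $n$, and write $\Xi^n=(x,\hat{y}_{<n})$ for the conditioning state. The key structural fact is that, for each $t$, the pair $(C_t,M_t)$ is a deterministic, bijective rearrangement of the full trace $y^\star$. Indeed, $C_t$ collects $y^\star_{<t}$ and $y^\star_{t+k:T}$, $M_t$ collects $y^\star_{t:t+k-1}$, and $k$ and $t$ are fixed. Hence, conditionally on $\Xi^n$, the random variables $y^\star$ and $(C_t,M_t)$ generate the same $\sigma$-algebra, and
\[
I(A_n;y^\star\mid\Xi^n)=I(A_n;C_t,M_t\mid\Xi^n).
\]
Here $A_n$ denotes the teacher's next action, jointly distributed with the hidden trace as in \Cref{prop:mask_entropy_smoothing}.

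Next apply the chain rule for conditional mutual information, peeling off the visible context first:
\[
I(A_n;C_t,M_t\mid\Xi^n)=I(A_n;C_t\mid\Xi^n)+I(A_n;M_t\mid\Xi^n,C_t).
\]
The last term is exactly the integrand of $\Gamma(t)$, since $(\Xi^n,C_t)=(x,\hat{y}_{<n},C_t)$. Rearranging gives, pointwise in $(x,y^\star),\hat{y},n$,
\[
I(A_n;M_t\mid x,\hat{y}_{<n},C_t)=I(A_n;y^\star\mid x,\hat{y}_{<n})-I(A_n;C_t\mid x,\hat{y}_{<n}).
\]

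Now take the expectation over $(x,y^\star)$, $\hat{y}$ and uniformly sampled $n$; linearity gives the stated formula. It remains to check that $K$ is independent of $t$. Its integrand $I(A_n;y^\star\mid x,\hat{y}_{<n})$ involves only the full trace, and the sampling distribution of $(x,y^\star),\hat{y},n$ does not involve the mask position. So $K$ is the same for every $t$.

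The only step needing care is the measure-theoretic bookkeeping. The conditional mutual informations must be taken under one fixed joint law of $(x,y^\star,\hat{y},n,A_n)$ that does not depend on $t$. All quantities must also be finite so the subtraction is legitimate. Finiteness holds because the vocabulary and horizon are finite, which bounds every term by $\log|\mathcal{V}|$. With that joint law fixed, the argument is just the chain rule, and the identity immediately yields the "visible-context" reading: minimizing $\Gamma(t)$ is equivalent to maximizing the information carried by the visible context $C_t$. That is the form needed for \Cref{thm:method_suffix_masking_optimality}.
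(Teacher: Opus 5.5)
Your proposal is correct and follows the paper's proof essentially step for step. Like the paper, you identify $(C_t,M_t)$ with $y^\star$ as a deterministic partition and apply the chain rule $I(A_n;C_t,M_t\mid x,\hat{y}_{<n})=I(A_n;C_t\mid x,\hat{y}_{<n})+I(A_n;M_t\mid x,\hat{y}_{<n},C_t)$, then rearrange and take expectations, noting that the full-trace term $K$ does not depend on $t$; your remark that every term is bounded by $\log|\mathcal{V}|$, which makes the subtraction legitimate, is bookkeeping the paper leaves implicit.
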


\begin{proof}
For a fixed $t$ and $k$, the pair $(C_t,M_t)$ is a deterministic partition of the same reference solution $y^\star$. By the chain rule for mutual information,
\[
I(A_n;C_t,M_t\mid x,\hat{y}_{<n})
=
I(A_n;C_t\mid x,\hat{y}_{<n})
+
I(A_n;M_t\mid x,\hat{y}_{<n},C_t).
\]
Since $(C_t,M_t)$ and $y^\star$ carry the same information,
\[
I(A_n;C_t,M_t\mid x,\hat{y}_{<n})
=
I(A_n;y^\star\mid x,\hat{y}_{<n}).
\]
Rearranging and taking expectations gives the result. The first term on the right is independent of the mask position because it conditions on the complete reference solution.
\end{proof}

\begin{assumption}[Causal information asymmetry]
\label{assump:causal_info_asymmetry}
For a fixed masking budget $k$, later mask positions leave a weakly more informative visible context for predicting the teacher's next action: for every $t_1<t_2$,
\[
\mathbb{E}
\left[
I(A_n;C_{t_1}\mid x,\hat{y}_{<n})
\right]
\le
\mathbb{E}
\left[
I(A_n;C_{t_2}\mid x,\hat{y}_{<n})
\right]
\]
holds.
\end{assumption}

\begin{theorem}[Optimality of suffix masking]
\label{thm:suffix_masking_optimality}
Under \Cref{lem:visible_context_duality,assump:causal_info_asymmetry}, $\Gamma(t)$ is non-increasing in $t$, and the minimum expected information loss is attained at the largest feasible mask position,
\[
t^\star=T-k+1,
\]
which masks the suffix of length $k$.
\end{theorem}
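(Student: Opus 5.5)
The plan is to combine the visible-context duality of \Cref{lem:visible_context_duality} with the monotonicity in \Cref{assump:causal_info_asymmetry}, then reduce the optimization over positions to a statement about a monotone function on a finite ordered set.

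\textbf{Step 1: rewrite $\Gamma$.} By \Cref{lem:visible_context_duality}, for every feasible $t\in\{1,\ldots,T-k+1\}$ we have
\[
\Gamma(t)=K-\Phi(t),
\qquad
\Phi(t)\triangleq\mathbb{E}\!\left[I(A_n;C_t\mid x,\hat{y}_{<n})\right],
\]
where $K$ does not depend on $t$. Before using this, I will check that $K$ and each $\Phi(t)$ are finite. This holds because $A_n$ ranges over the finite vocabulary $\mathcal{V}$, so every conditional mutual information is bounded by $\log|\mathcal{V}|$. Subtracting these finite quantities is therefore legitimate.

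\textbf{Step 2: monotonicity.} \Cref{assump:causal_info_asymmetry} states exactly that $\Phi(t_1)\le\Phi(t_2)$ whenever $t_1<t_2$. Hence $\Gamma(t_1)=K-\Phi(t_1)\ge K-\Phi(t_2)=\Gamma(t_2)$, so $\Gamma$ is non-increasing on the feasible index set.

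\textbf{Step 3: location of the minimum.} The feasible mask positions form the finite ordered set $\{1,\ldots,T-k+1\}$, since the block $y^\star_{t:t+k-1}$ must lie inside the trace. A non-increasing function on such a set attains its minimum at the largest element. Thus $\Gamma(T-k+1)\le\Gamma(t)$ for all feasible $t$.

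\textbf{Step 4: interpretation.} At $t^\star=T-k+1$ we have $M_{t^\star}=y^\star_{T-k+1:T}$, so the mask removes the suffix of length $k$ and $C_{t^\star}=y^\star_{<T-k+1}$. The minimizer may not be unique: any tie in the assumption's inequality yields equally optimal earlier positions. The claim is only that the suffix attains the minimum, and I will note this explicitly.

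\textbf{Main obstacle.} The argument itself is immediate; the delicate point is consistency of the expectations. The expectation in $\Gamma$, the one defining $K$, and the one in the assumption must all be taken over the same joint law of $(x,y^\star)$, $\hat{y}$, and the uniformly sampled position $n$. Only then does the lemma's identity hold term by term and the assumption plug in directly. I will state this common measure explicitly at the start of the proof.
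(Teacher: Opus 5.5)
Your proposal is correct and matches the paper's proof essentially verbatim: you use \Cref{lem:visible_context_duality} to write $\Gamma(t)=K-\mathbb{E}[I(A_n;C_t\mid x,\hat{y}_{<n})]$, apply \Cref{assump:causal_info_asymmetry} to get $\Gamma(t_1)\ge\Gamma(t_2)$ for $t_1<t_2$, and conclude that the minimum over $\{1,\ldots,T-k+1\}$ is attained at the largest index. Your additional remarks are sound refinements that the paper leaves implicit: finiteness of the mutual informations (each bounded by $\log|\mathcal{V}|$), the need for a common underlying measure, and possible non-uniqueness of the minimizer.
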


\begin{proof}
Let $t_1<t_2$. By \Cref{lem:visible_context_duality},
\[
\Gamma(t_1)-\Gamma(t_2)
=
\mathbb{E}\!\left[I(A_n;C_{t_2}\mid x,\hat{y}_{<n})\right]
-
\mathbb{E}\!\left[I(A_n;C_{t_1}\mid x,\hat{y}_{<n})\right].
\]
The assumed causal information asymmetry makes the right-hand side nonnegative, so $\Gamma(t_1)\ge \Gamma(t_2)$. Therefore $\Gamma(t)$ is non-increasing in $t$, and its minimum over $t\in\{1,\ldots,T-k+1\}$ is achieved at $t=T-k+1$.
\end{proof}